\documentclass{article}

\usepackage[preprint]{styles/corl_2026}

\usepackage{amsmath}
\usepackage{amssymb}
\usepackage{amsthm}
\usepackage{mathtools}
\usepackage{bm}
\usepackage{microtype}

\usepackage{graphicx}
\usepackage{booktabs}
\usepackage{multirow}
\usepackage{makecell}
\usepackage{subcaption}
\usepackage{wrapfig}

\usepackage{algorithm}
\usepackage{algpseudocode}

\usepackage{enumitem}
\usepackage{xcolor}
\usepackage[table]{xcolor}
\usepackage{url}
\usepackage{hyperref}
\hypersetup{
    colorlinks=true,
    linkcolor=blue!60!black,
    citecolor=green!50!black,
    urlcolor=blue!60!black,
}
\usepackage{titletoc}
\usepackage{cleveref}
\crefname{appendix}{Appendix}{Appendices}
\Crefname{appendix}{Appendix}{Appendices}
\usepackage{pgfplots}
\usepgfplotslibrary{polar}
\pgfplotsset{compat=1.18}

\newtheorem{theorem}{Theorem}
\newtheorem{proposition}[theorem]{Proposition}

\newtheorem{corollary}[theorem]{Corollary}
\theoremstyle{definition}
\newtheorem{definition}[theorem]{Definition}
\newtheorem{assumption}[theorem]{Assumption}
\theoremstyle{remark}
\newtheorem{remark}[theorem]{Remark}

\newcommand{\equivla}{\textsc{EquiVLA}}

\newcommand{\equiperceptor}{\textsc{EquiPerceptor}}
\newcommand{\equiactor}{\textsc{EquiActor}}

\newcommand{\SO}{\mathrm{SO}}
\newcommand{\SE}{\mathrm{SE}}

\newcommand{\norm}[1]{\left\lVert #1 \right\rVert}
\newcommand{\set}[1]{\{#1\}}

\usepackage{pifont}

\title{EquiVLA: A General Framework for
Rotationally Equivariant Vision-Language-Action Models}
\author{
  \parbox{\textwidth}{\centering
  \vspace{0.2in}
    Thien-Loc Ha$^{1\ast}$ \quad
    Quang-Tan Nguyen$^{1\ast}$ \quad
    Trong-Bao Ho$^{1\ast}$ \quad
    Long Dinh$^{1,2}$ \\[0.15em]
    Minh Duc Nguyen$^{1,2}$ \quad
    Gia-Binh Nguyen$^{1,2}$ \quad
    Pham Tri Quang$^{1}$ \quad
    Minh N. Vu$^{1,2}$ \\[0.15em]
    Duy M. H. Nguyen$^{3,4,5}$ \quad
    An Thai Le$^{1,2}$ \quad
    Ngo Anh Vien$^{1,2}$ \\[1.2em]
    \footnotesize\normalfont
    $^{1}$VinRobotics \quad
    $^{2}$VinUniversity \quad
    $^{3}$DFKI \quad
    $^{4}$University of Stuttgart \quad
    $^{5}$IMPRS-IS \\[0.4em]
    \footnotesize\normalfont $^{\ast}$Equal Contributors.
  }
}
\begin{document}

\maketitle

\begin{abstract}
Vision-Language-Action (VLA) models have emerged as a powerful paradigm for generalist
robot manipulation, yet they lack geometric inductive biases: policies trained at specific
orientations require substantially more data to generalize across rotational configurations.
We present \equivla{}, the first general framework for end-to-end $\SO(2)$-equivariant VLA
models, applicable to any architecture coupling a frozen vision-language backbone with a
flow-matching Diffusion Transformer action head. \equivla{} introduces \equiperceptor{}, which produces approximately $\SO(2)$-equivariant visual
representations from frozen ViT features; and \equiactor{}, an exactly
$\SO(2)$-equivariant flow-matching Diffusion Transformer action head. Together, they
establish an approximate $\SO(2)$ equivariance chain from camera observations to predicted
action sequences. Instantiated on GR00T~N1.5 and evaluated across four LIBERO suites,
CALVIN ABCD$\to$D, and five real-robot tasks on Mobile ALOHA, \equivla{} achieves $92.6\%$
average success on LIBERO (vs.\ $78.1\%$ baseline), an average sequence length of $4.03$
on CALVIN (vs.\ $3.45$), and improves real-robot success from $54\%$ to $72\%$. Project page: \url{https://equivla.github.io/}
\end{abstract}

\keywords{Equivariance, Robot Manipulation, Vision-Language-Action}

\section{Introduction}
\label{sec:intro}
Vision-Language-Action models~\citep{bjorck2025gr00t, 
black2024pi0, kim2024openvla} have emerged as a powerful paradigm for generalist robot manipulation, achieving strong performance across diverse tasks by conditioning flow-matching action heads on rich representations from pretrained vision-language backbones. These models demonstrate impressive generalization through language-conditioned imitation learning, enabling robots to follow natural-language instructions across diverse objects and environments. However, despite their capabilities, current VLA models face a fundamental bottleneck: generalization to novel object orientations remains data-intensive, as policies trained at specific orientations must see substantially more 
data to generalize across rotational configurations encountered in deployment.

The root cause of this bottleneck lies in a geometric structure that current VLA models fail to exploit. Robot manipulation tasks possess an inherent rotational symmetry: rotating the workspace and its objects yields an equivalent task that requires a correspondingly rotated action. This SO(2) structure means that knowledge learned at one orientation should transfer immediately to all others, yet current VLA architectures ignore this property entirely. Their VLM backbones process images without any awareness of rotational structure, encoding visual content at fixed spatial locations that change meaning under rotation, and their DiT action heads apply unconstrained transformations that treat rotationally related observations as independent inputs. The result is a model that must independently learn the same manipulation skill at every orientation it encounters - a structural absence of geometric inductive bias that data augmentation alone can mitigate but not fully resolve, as augmentation introduces no architectural guarantee of equivariance~\citep{wang2025practical}.

Recent work has demonstrated that encoding \(\SO(2)\) equivariance directly into policy architectures yields substantial benefits in both data efficiency and task performance, with equivariant policies consistently outperforming non-equivariant baselines across manipulation benchmarks~\citep{wang2024equidiff, yang2024equibot, wang2022so2rl, tie2025etseed}. However, existing equivariant policy methods are limited to architectures trained from scratch on image observations or point clouds~\citep{wang2024equidiff, yang2024equibot, zhu2025equact}, or apply canonicalization as a model-agnostic wrapper~\citep{deng2025eqbot}; none addresses the full VLM-plus-DiT pipeline that defines modern large-scale VLA models, leaving the most capable robot learning systems without the geometric structure that has proven so effective at a smaller scale.

To bridge the gap, we present \equivla{}, a general framework for end-to-end
$\SO(2)$-equivariant VLA models designed for architectures
coupling a frozen VLM backbone with a flow-matching DiT action head.
\equivla{} comprises two composable modules, \equiperceptor{} and
\equiactor{}, establishing an approximate $\SO(2)$ equivariance chain
from raw camera images to predicted action sequences without modifying
pretrained VLM weights. We instantiate \equivla{} on GR00T~N1.5~\citep{bjorck2025gr00t} and evaluate across LIBERO~\citep{liu2023libero}, CALVIN~\citep{mees2022calvin}, and real-robot tasks on Mobile ALOHA~\citep{fu2024mobile}. \equivla{} achieves $92.6\%$ average success on LIBERO under relative control (vs.\ $78.1\%$ for the baseline), an average sequence length of $4.03$ on CALVIN ABCD$\to$D (vs.\ $3.45$ for the baseline, approaching history-based methods without temporal context), and improves average real-robot success from $54\%$ to $72\%$ across five tasks. We make several contributions in this work:
\begin{enumerate}[leftmargin=*,topsep=2pt,itemsep=1pt]
    \item \textbf{Token-level Frame Averaging} (\equiperceptor{}): an extension of Frame Averaging (FA)~\citep{puny2021frame} from globally pooled vectors to spatially indexed ViT patch token sequences, yielding approximately $\SO(2)$-equivariant visual representations from any frozen VLM.
    \item \textbf{Equivariant Flow-matching DiT} (\equiactor{}): the first $\SO(2)$-equivariant Diffusion Transformer action head for robot policy learning, with equivariant attention, state encoding, and action decoding via steerable layers in regular feature space.
    \item \textbf{End-to-end Equivariant VLA} (\equivla{}): a framework combining \equiperceptor{} and \equiactor{} to establish 
    an approximate $\SO(2)$ equivariance chain from camera observations 
    to action sequences, demonstrated on GR00T~N1.5 across simulation 
    and real-robot benchmarks.
\end{enumerate}

\section{Related Work}
\label{sec:related}
\textbf{Vision-Language-Action Models.}
VLA models unify visual perception, language understanding, and action generation for generalist robot manipulation~\citep{brohan2022rt, zitkovich2023rt, team2024octo}. Early autoregressive VLAs~\citep{kim2024openvla, brohan2022rt, zitkovich2023rt} tokenize actions with LLM backbones, while more recent diffusion and flow-matching VLAs~\citep{bjorck2025gr00t, black2024pi0, intelligence2025pi_, liu2024rdt,
li2024cogact, shukor2025smolvla, kim2025fine} synthesize continuous action sequences via iterative denoising~\citep{chi2023diffusion} or flow matching, yielding richer action distributions and smoother trajectories.
Despite their strong performance, no existing VLA encodes rotational symmetry
in the architecture, instead relying on data augmentation or increased
demonstration diversity to generalize across object orientations, a costly
substitute for the systematic guarantees that equivariance provides. Concurrent work on embodiment equivariance~\citep{intelligence2025pi_} addresses cross-embodiment transfer but does not encode spatial symmetries within a single embodiment's observation-action pipeline. Our work addresses this gap by imposing $\SO(2)$ equivariance on both the VLM visual backbone and the flow-matching action head, enabling generalization to unseen orientations by construction rather than by memorization.

\textbf{Equivariant Robot Policies.}
Equivariant policy learning exploits geometric symmetries to improve sample efficiency and generalization~\citep{wang2025practical, wang2024equidiff, wang2022so2rl, yang2024equibot, huang2022equivariant, jia2022seil}.
Prior work establishes $\SO(2)$ and $\mathrm{SIM}(3)$ equivariance across a range of policy architectures, from pick-and-place~\citep{huang2022equivariant, jia2022seil} to 6-DoF closed-loop diffusion policies~\citep{wang2024equidiff}, trajectory-level $\SE(3)$-equivariant diffusion~\citep{tie2025etseed}, and point-cloud-based policies~\citep{yang2024equibot}, which consistently reduce data requirements and improve generalization to novel object poses.
EquAct~\citep{zhu2025equact} introduces an $\SE(3)$-equivariant point transformer for multi-task manipulation with language conditioning via invariant FiLM layers, achieving strong performance on RLBench~\citep{james2020rlbench}, but operates on point clouds trained from scratch rather than leveraging pretrained VLM representations.
Eq.Bot~\citep{deng2025eqbot} takes a model-agnostic canonicalization approach, transforming observations into a canonical orientation before feeding them to an unmodified policy; while convenient, canonicalization provides only approximate equivariance that depends on the quality of frame estimation and does not propagate geometric structure through the action head.
All of the above methods either train compact equivariant architectures from scratch or wrap existing policies with canonicalization; none integrates equivariance into the full pipeline of a large-scale VLA model combining a pretrained VLM backbone with a flow-matching DiT action head.

\textbf{Symmetrizing Pretrained Models.}
To impose symmetry on pretrained models without retraining, Frame Averaging (FA)~\citep{puny2021frame} averages predictions over a group orbit of transformed inputs; \citet{wang2025practical} apply FA to ResNet's global image embedding in diffusion policy, demonstrating that even simple symmetry incorporation significantly improves sample efficiency. However, both works apply FA only to globally pooled single vectors and cannot be directly extended to ViT~\citep{dosovitskiy2020image} spatial token sequences, where rotating the input displaces each patch token to a different grid position and naively averaging mixes spatially misaligned tokens, destroying the positional structure that manipulation policies require.
EquiLLM~\citep{li2025equillm} injects equivariance into a frozen LLM by routing only invariant features through the backbone while handling directional information via an equivariant GNN encoder, but operates on 3D molecular systems rather than visual robot manipulation.
Our work brings $\SO(2)$ equivariance to the full pipeline of a VLA model by adapting the invariant-injection principle of EquiLLM~\citep{li2025equillm} to the VLA setting, extending FA to spatially indexed ViT patch token sequences and introducing an equivariant flow-matching DiT action head, to our knowledge, the \emph{first} framework to impose end-to-end $\SO(2)$ equivariance on a large-scale VLA model without modifying pretrained VLM weights.

\section{Method}
\label{sec:method}
\equivla{} imposes $\SO(2)$ equivariance on VLA architectures that 
couple a pretrained VLM backbone with a flow-matching action head 
through two composable modules, with no modification to pretrained 
VLM weights. \Cref{fig:equivla} illustrates the architecture; 
\equiperceptor{} is described in~\Cref{sec:method:equiperceptor} 
and \equiactor{} is in~\Cref{sec:method:equiactor}.

\begin{figure*}[htbp]
    \centering
    \includegraphics[width=\linewidth]{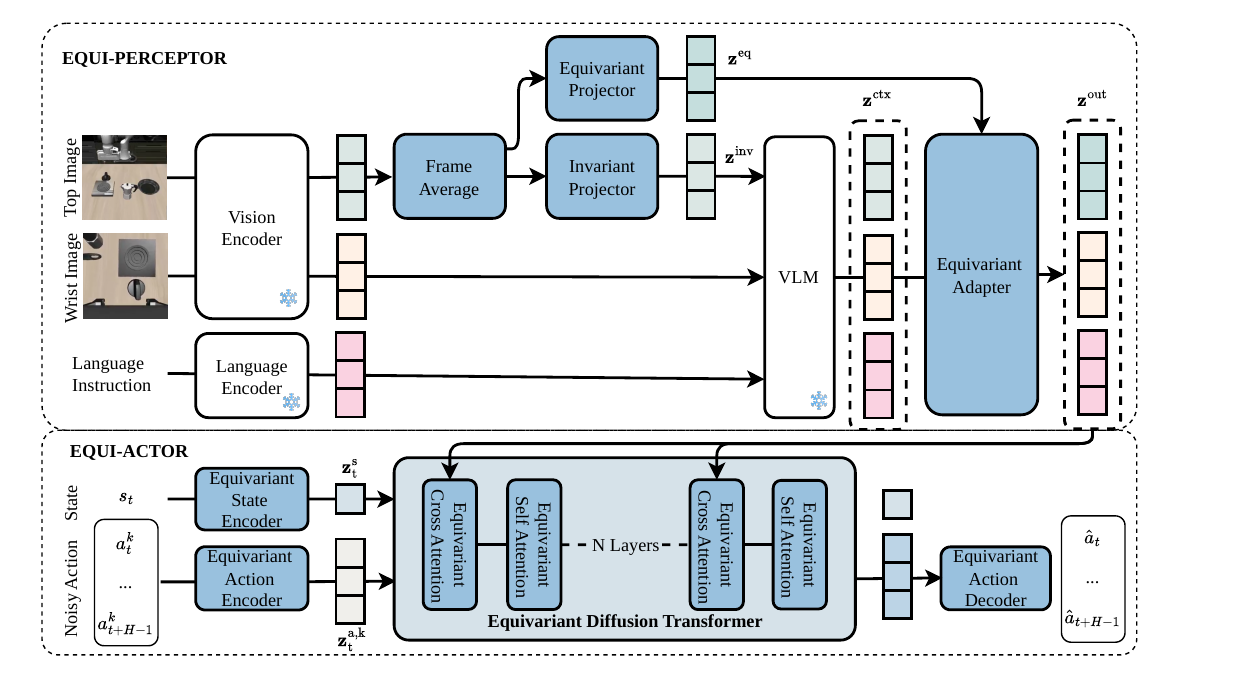}
    \caption{
    \textbf{EquiVLA architecture.}
    (\textit{Top}) \equiperceptor{} symmetrizes the frozen VLM's visual pipeline via
Token-level Frame Averaging, producing equivariant tokens $\mathbf{z}^{\mathrm{eq}}$ and
invariant tokens $\mathbf{z}^{\mathrm{inv}}$. The latter are fed into the frozen VLM
alongside the wrist image and language instruction to produce context tokens
$\mathbf{z}^{\mathrm{ctx}}$, fused with $\mathbf{z}^{\mathrm{eq}}$ via the
Equivariant Adapter to form the visual representation passed to \equiactor{}.
(\textit{Bottom}) \equiactor{} replaces the standard DiT action head with an
$\SO(2)$-equivariant Diffusion Transformer, refining state and noisy action tokens through
$N$ equivariant cross- and self-attention layers conditioned on visual context, then
decoding actions $\hat{\mathbf{a}}_t$ via flow matching.
    }
    \label{fig:equivla}
\end{figure*}

\subsection{Preliminaries}
\label{sec:method:prelim}

\textbf{Equivariance.}
A function $f$ is equivariant if it commutes with the
transformations of a symmetry group $G$: for all $g \in G$,
$f(\rho_{\mathrm{in}}(g)\mathbf{x}) = \rho_{\mathrm{out}}(g)f(\mathbf{x})$,
where $\rho: G \to \mathrm{GL}(n)$ is a \emph{group representation}
mapping each group element to an invertible matrix acting on the feature
space. Invariance is the special case $\rho_{\mathrm{out}}(g) = I$.
We sometimes write this compactly as $f(g \cdot \mathbf{x}) = g \cdot
f(\mathbf{x})$.

\textbf{Symmetry Group.} We focus on $\SO(2)$ and its finite cyclic 
subgroup $G = C_u \leq \SO(2)$ containing $u$ discrete rotations 
$\{1, r, r^2, \dots, r^{u-1}\}$ where $r = R_{360^\circ/u}$. 
In all experiments, we use $C_8$; this choice is ablated 
in~\Cref{sec:ablations}.

\textbf{Representations.}
Three representations of $C_u$ are used throughout this paper.
The \emph{trivial representation} $\rho_0$ acts on an invariant 
scalar $x \in \mathbb{R}$ by $\rho_0(g)x = x$.
The \emph{irreducible representation} $\rho_\omega$ acts on a vector 
$\mathbf{v} \in \mathbb{R}^2$ by a $2{\times}2$ rotation matrix at 
frequency $\omega$:
$\rho_\omega(g)\mathbf{v} = \begin{pmatrix} \cos\omega g & -\sin\omega g \\ \sin\omega g & \cos\omega g \end{pmatrix}\mathbf{v}$;
in particular, $\rho_1$ is the standard 2D rotation and $\rho_2$ 
rotates at twice the frequency.
The \emph{regular representation} $\rho_{\mathrm{reg}}$ acts on
$\mathbf{x} \in \mathbb{R}^u$ by cyclically permuting its coordinates:
for $g = r^m \in C_u$,
$\rho_{\mathrm{reg}}(g)\mathbf{x} =
(x_{u-m+1}, \dots, x_u, x_1, \dots, x_{u-m})$. Representations combine via direct sum $\rho_a \oplus \rho_b$,
where each component transforms independently.
All equivariant layers are implemented using steerable 
CNNs~\citep{weiler2019general, weiler2023equivariant, cesa2022escnn}.

\subsection{\equiperceptor{}: Equivariant Visual Token Fusion}
\label{sec:method:equiperceptor}

\equiperceptor{} builds on the core insight of EquiLLM~\citep{li2025equillm}: a frozen 
language model can act as an invariant feature processor, provided only invariant features 
are fed into it while equivariant information flows through a separate stream. However, 
EquiLLM was designed for 3D molecular systems using a geometric GNN encoder, and does not 
transfer directly to the vision-language pipeline of robot manipulation. We adapt this 
principle through two novel components.



\textbf{Token-level Frame Averaging.} A ViT outputs a token sequence $\{\mathbf{z}_i\}_{i=1}^{N}$ in which each token is tied to a specific spatial patch. When the input image is rotated by a group element $h \in G$, each patch moves to a different grid position. Therefore, naively averaging token sequences across different group elements averages tokens that no longer correspond to the same spatial location. This produces an invariant representation, but it also removes the spatial localization needed for manipulation.
The inverse group element $h^{-1}$ must therefore be applied to ``undo'' each
transformation before averaging, acting simultaneously at two levels: 1) a spatial
permutation $\tau(h^{-1})$ that maps each displaced token back to its canonical
patch position, and 2) a feature-space transformation $\rho_{\mathrm{reg}}(h^{-1})$
in the regular representation of $G$.
Concretely, $\tau(h^{-1})$ is computed by applying $h^{-1}$ to each patch center
coordinate and assigning the result to the nearest canonical grid index.
This nearest-neighbor assignment introduces a discretization error that is zero when $h$ maps every patch center exactly onto another grid position (i.e., for $C_4$ rotations on a square grid) and bounded by sub-patch displacement otherwise (we quantify this error empirically in~\Cref{sec:ablations}).
Let $f_{\theta}: \mathbb{R}^{H \times W \times 3} \to \mathbb{R}^{N \times D}$
denote the frozen ViT. We define the equivariant and invariant streams as:
\begin{equation}
  \mathbf{z}^{\mathrm{eq}}(\mathbf{x})
  = \frac{1}{|G|}\sum_{h \in G}
    \bigl[\tau(h^{-1}) \otimes \rho_{\mathrm{reg}}(h^{-1})\bigr]
    \cdot f_{\theta}(h \cdot \mathbf{x})\,,
  \label{eq:fa_equi}
\end{equation}
\begin{equation}
  \mathbf{z}^{\mathrm{inv}}(\mathbf{x})
  = \frac{1}{|G|}\sum_{h \in G} f_{\theta}(h \cdot \mathbf{x})\,,
  \label{eq:fa_inv}
\end{equation}
where the tensor product in \Cref{eq:fa_equi} acts jointly on token positions (via $\tau$) and feature channels (via $\rho_{\mathrm{reg}}$).
$\mathbf{z}^{\mathrm{eq}}$ satisfies
$\mathbf{z}^{\mathrm{eq}}(g \cdot \mathbf{x}) =
[\tau(g) \otimes \rho_{\mathrm{reg}}(g)] \cdot \mathbf{z}^{\mathrm{eq}}(\mathbf{x})$
for all $g \in G$ (proof in~\Cref{app:equivariance_proof}), while $\mathbf{z}^{\mathrm{inv}}$
is $G$-invariant by the standard averaging argument~\citep{puny2021frame}.

\textbf{Equivariant and Invariant Projectors} project the two FA token maps into the VLM token dimension via separate lightweight layers: a standard MLP $\psi^{\mathrm{inv}}$
yielding $\tilde{\mathbf{z}}^{\mathrm{inv}} = \psi^{\mathrm{inv}}(\mathbf{z}^{\mathrm{inv}})$, and a $G$-equivariant linear layer $\psi^{\mathrm{eq}}$ yielding $\tilde{\mathbf{z}}^{\mathrm{eq}} = \psi^{\mathrm{eq}}(\mathbf{z}^{\mathrm{eq}})$. $\tilde{\mathbf{z}}^{\mathrm{inv}}$ is concatenated with the 
wrist-camera and language tokens, then forwarded through the frozen VLM to produce language-grounded context tokens $\mathbf{z}^{\mathrm{ctx}}$. 
Both $\tilde{\mathbf{z}}^{\mathrm{eq}}$ and $\mathbf{z}^{\mathrm{ctx}}$ 
are then passed to the Equivariant Adapter, which we explain next.

\textbf{Equivariant Adapter} fuses $\tilde{\mathbf{z}}^{\mathrm{eq}}$ with
$\mathbf{z}^{\mathrm{ctx}}$ via a learned invariant gate, ensuring all gate inputs are restricted to invariant quantities to preserve equivariance. Three invariant summaries are extracted: mean-pooled language $\mathbf{s}^{\mathrm{lang}}$ and vision $\mathbf{s}^{\mathrm{vis}}$ from $\mathbf{z}^{\mathrm{ctx}}$, and group-pooled equivariant tokens $\bar{\mathbf{z}}^{\mathrm{eq}} = \mathrm{mean}_{G}(\tilde{\mathbf{z}}^{\mathrm{eq}})$.
The gate $\boldsymbol{\alpha} = \sigma(\mathbf{W}_{\mathrm{gate}}[ \mathbf{s}^{\mathrm{lang}}; \mathbf{s}^{\mathrm{vis}}; \bar{\mathbf{z}}^{\mathrm{eq}}])$, tiled to regular representation $\boldsymbol{\alpha}^{\mathrm{reg}} = \boldsymbol{\alpha} \otimes \mathbf{1}_{|G|}$ (Kronecker product replicating $\boldsymbol{\alpha}$ 
across $|G|$ group channels), blends a semantic branch with an equivariant branch:
\begin{equation}
    \mathbf{z}^{\mathrm{out}}_{\mathrm{eq}}
    = \boldsymbol{\alpha}^{\mathrm{reg}}
      \odot \mathbf{W}_s(\mathbf{s}^{\mathrm{inv}} \otimes \mathbf{1}_{|G|})
    + (\mathbf{1} - \boldsymbol{\alpha}^{\mathrm{reg}})
      \odot \mathbf{W}_g(\tilde{\mathbf{z}}^{\mathrm{eq}}) \,,
    \label{eq:adapter_out}
\end{equation}
where $\mathbf{s}^{\mathrm{inv}} = (\mathbf{s}^{\mathrm{lang}} +
\mathbf{s}^{\mathrm{vis}})/2$.
Since $\boldsymbol{\alpha}^{\mathrm{reg}}$ is invariant and $\mathbf{W}_g$ is
$G$-equivariant, $\mathbf{z}^{\mathrm{out}}_{\mathrm{eq}}$ is equivariant (Proof is in~\Cref{prop:adapter_equivariance}).
Context tokens $\mathbf{z}^{\mathrm{ctx}}$ are similarly updated by blending with an equivariant summary $\mathbf{s}^{\mathrm{eq}} =
\mathrm{mean}_{N,G}(\mathbf{z}^{\mathrm{out}}_{\mathrm{eq}})$ via a per-token
invariant gate (see Appendix~\Cref{app:adapter_inv_stream}). 
The final output $\mathbf{z}^{\mathrm{out}} = [\mathbf{z}^{\mathrm{out}}_{\mathrm{eq}}; \mathbf{z}^{\mathrm{out}}_{\mathrm{inv}}; \mathbf{z}^{\mathrm{out}}_{\mathrm{lang}}]$ is passed as cross-attention context to \equiactor{}.

\subsection{\equiactor{}: \(\SO(2)\)-Equivariant Flow-Matching Action Head}
\label{sec:method:equiactor}

\equiactor{} takes context tokens $\mathbf{z}^{\mathrm{out}}$ from \equiperceptor{}, proprioceptive state embedding $\mathbf{z}^s_t$, and noisy action tokens $\mathbf{z}^{a,k}_t$, and outputs action chunk $\hat{\mathbf{a}}_t$ via flow matching. It replaces the standard DiT action head with a fully $\SO(2)$-equivariant counterpart: all linear projections, attention $\mathbf{Q}/\mathbf{K}/\mathbf{V}$ matrices, state/action encoders, and the action decoder are replaced by $G$-steerable layers in the regular feature space. Each token's feature vector transforms under $\rho_{\mathrm{reg}}(g)$, while token positions are unchanged. Temporal ordering of state and action tokens is encoded via learned position embeddings in the trivial representation, projected equivariantly to regular feature space; since temporal order is invariant to scene rotation, preserving $\SO(2)$ equivariance.

\textbf{Action Representation.}
We adopt the $\SO(2)$-equivariant action representation 
from~\citet{wang2024equidiff}, where end-effector position and 
orientation transform as $\SO(2)$ vectors under scene rotation, 
gripper width is rotation-invariant, and the group action on 
$\mathbf{a}_t$ decomposes into irreducible representations as:
\begin{equation}
    g \cdot \mathbf{a}_t =
    \begin{cases}
        \left(\rho_1^3 \oplus (\rho_1 \oplus \rho_0) \oplus \rho_0\right)(g)\,
        \mathbf{a}_t
        & \text{absolute control} \\[4pt]
        P^{-1}
        \left(\rho_0^6 \oplus \rho_1^4 \oplus \rho_2\right)(g)\,
        P\, \mathbf{a}_t
        & \text{relative control}
    \end{cases}
    \label{eq:action_repr}
\end{equation}
where $P$ reorders action components into irreducible blocks, 
For absolute control, $\rho_1^3$ encodes the 6D end-effector 
rotation as three 2D vector pairs, $\rho_1 \oplus \rho_0$ encodes 
$xy$ translation and $z$ height, and $\rho_0$ encodes gripper width.
For relative control, $\rho_0^6$ encodes invariant scalar offsets 
and $\rho_2$ captures the frequency-2 component from quadratic 
terms in the relative rotation decomposition.

\textbf{Equivariant Attention.}
Attention scores are computed via the geometric inner product $\langle \mathbf{q}, \mathbf{k} \rangle = \sum_g \mathbf{q}[g] \cdot \mathbf{k}[g]$, which is $G$-invariant since $\rho_{\mathrm{reg}}(g)$ is orthogonal. Invariant scores with equivariant $\mathbf{V}$ yield equivariant attention output.

\textbf{Equivariant Self-attention and Cross-attention.} $\mathbf{Q}$, $\mathbf{K}$, $\mathbf{V}$ are projected via regular-to-regular layers and equivariance follows directly from the geometric inner product. For cross-attention, in the full \equivla{} setting, $\mathbf{z}^{\mathrm{out}}$ is equivariant, and all projections are regular-to-regular. In the base VLA~+~\equiactor{} ablation (no \equiperceptor{}), context tokens are invariant VLM outputs; $\mathbf{Q}$ and $\mathbf{K}$ are projected to the trivial representation to keep scores invariant, while $\mathbf{V}$ remains in regular representation. Equivariance of the action token stream is preserved in both cases.


\subsection{\equivla{}: End-to-end Equivariance}

\equivla{} combines \equiperceptor{} and \equiactor{} as two
composable modules. \equiactor{} is trained from scratch, as steerable layers are structurally
incompatible with unconstrained pretrained weights.
This means the action head does not benefit from the usual large-scale pretraining. However, the visual representations from the frozen VLM are preserved intact, and the equivariant inductive bias compensates for the loss of pretrained action-head weights, as demonstrated empirically in \Cref{sec:experiments}.
Together, \equiperceptor{} and \equiactor{} establish an approximate $\SO(2)$ equivariance: for any planar rotation
$g \in C_u$, rotating the camera observations $\mathbf{o}_t$ and proprioceptive state $\mathbf{s}_t$ produces
a correspondingly rotated action sequence,
%

\begin{equation}
    \hat{\mathbf{a}}_t(g \cdot \mathbf{o}_t,\, \rho_s(g)\mathbf{s}_t)
    \approx \rho_a(g) \cdot \hat{\mathbf{a}}_t(\mathbf{o}_t,\, \mathbf{s}_t)
    \quad \forall\, g \in C_u \,,
    \label{eq:equivla_guarantee}
\end{equation}

where $\rho_a(g)$ is defined in \Cref{eq:action_repr}, and 
$\rho_s(g)$ follows the absolute control representation 
therein, as the proprioceptive state is expressed in the world frame. The approximation error stemming from the
approximate $\SO(2)$ equivariance of the frozen ViT in \equiperceptor{} is bounded formally
in~\Cref{thm:approx_equivariance} and~\Cref{prop:exact_cases} shows when the error vanishes. \equiactor{} alone satisfies this relation exactly when
paired with invariant VLM context tokens (\Cref{cor:e2e} with $\Delta = 0$).

\section{Experiments}
\label{sec:experiments}

We experiment on the LIBERO and CALVIN simulation benchmarks and real-robot tasks
with Mobile ALOHA~\citep{fu2024mobile}, comparing three models. \textbf{GR00T~N1.5}~\citep{bjorck2025gr00t} serves
as the non-equivariant baseline. \textbf{GR00T~N1.5 + \equiactor{}} isolates the contribution
of action-head equivariance by replacing only the DiT head. \textbf{\equivla{}} (ours) further
incorporates \equiperceptor{} on top of GR00T~N1.5.
All models share the same pretrained weights and are trained with 
identical hyperparameters (see~\Cref{app:training_details}).

\subsection{LIBERO Benchmark}
\label{sec:experiments:libero}

\textbf{Setup.} We evaluate on four LIBERO suites~\citep{liu2023libero}: LIBERO-10 (long-horizon sequential tasks), LIBERO-Goal (language-conditioned goal specification), LIBERO-Object (object-centric manipulation), and LIBERO-Spatial (spatially precise placement). Each suite contains 10 tasks with approximately 40 demonstrations per task. A separate checkpoint is trained for each suite. We evaluate under both relative and absolute end-effector control, replanning at every timestep, and report success rate averaged over 50 rollouts per task (500 per suite) and two seeds. We additionally include published results for \(\pi_0\)~\citep{black2024pi0}, OpenVLA~\citep{kim2024openvla}, and SmolVLA~\citep{shukor2025smolvla} as reference baselines; note that these use different pretraining data and hyperparameters, so comparisons are indicative rather than controlled.

\begin{wraptable}[14]{R}{0.56\textwidth}
  \vspace{-13pt}
  \centering
  \caption{%
    Success rates (\%) on LIBERO benchmarks. $\dagger$ denotes published results; all others are our runs.
  }
  \label{tab:libero-results}
  \scriptsize
  \setlength{\tabcolsep}{4pt}
  \renewcommand{\arraystretch}{1.1}
  \begin{tabular}{lc ccccc}
    \toprule
    Method & Ctrl & 10 & Goal & Obj & Spat & Avg.\ $\uparrow$ \\
    \midrule
    \(\pi_0\)~\citep{black2024pi0}$^\dagger$         & \multirow{6}{*}{Rel.} & 73.0 & 93.0 & 86.0 & 90.0 & 86.0 \\
    OpenVLA~\citep{kim2024openvla}$^\dagger$          &                       & 55.0 & 79.2 & 88.4 & 84.7 & 76.8 \\
    SmolVLA~\citep{shukor2025smolvla}       &                       & 61.0 & 61.4 & 66.0 & 74.0 & 65.6 \\
    GR00T N1.5~\citep{bjorck2025gr00t}               &                       & 72.0 & 75.0 & 83.4 & 82.0 & 78.1 \\
    \cmidrule(lr){1-1}
    GR00T N1.5 + \equiactor{}                             &  & 82.6 & 88.0 & 95.2 & \textbf{98.2} & 91.0 \\
    \rowcolor{blue!8} \textbf{\equivla{}} (ours)                          &                       & \textbf{87.6} & \textbf{89.4} & \textbf{98.0} & 95.4 & \textbf{92.6} \\
    \midrule
    GR00T N1.5~\citep{bjorck2025gr00t}               & \multirow{3}{*}{Abs.} & 52.0 & 55.2 & 74.6 & 68.6 & 62.6 \\
    GR00T N1.5 + \equiactor{}                             &                       & 63.0 & 70.0 & 79.4 & \textbf{82.0} & 73.6 \\
    \rowcolor{blue!8} \textbf{\equivla{}} (ours)                          &                       & \textbf{73.6} & \textbf{70.4} & \textbf{83.0} & 77.6 & \textbf{76.1} \\
    \bottomrule
  \end{tabular}
\end{wraptable}

\textbf{Results.}
\equivla{} achieves the highest average success rate under both control modes: $92.6\%$ under relative control, surpassing SmolVLA ($65.6\%$), $\pi_0$ ($86.0\%$), and GR00T~N1.5 ($78.1\%$); and $76.1\%$ under absolute control, (+$13.5$\,pp over the GR00T~N1.5 absolute). The gains are progressive: \equiactor{} alone accounts for most of the improvement ($91.0\%$ relative, $73.6\%$ absolute), with \equiperceptor{} contributing a further $1.6$ and $2.5$\,pp, confirming that equivariant visual features complement action-head equivariance.

\subsection{CALVIN Benchmark}
\label{sec:experiments:calvin}
\textbf{Setup.}
We evaluate on the CALVIN ABCD$\to$D benchmark~\citep{mees2022calvin}, which requires
completing chains of five language-conditioned manipulation tasks in a tabletop environment.
Models are trained on environments A, B, C, D, and evaluated zero-shot on held-out 
environment D over 1000 instruction chains, using single-step observations
(image + proprioception). We do not explore multi-frame variants, as visual token sequences
scale proportionally with history length, making it prohibitively expensive at training time.
However, we also report HULC~\citep{mees2022matters} and MoDE~\citep{reuss2024mode} as multi-frame baselines, which are not directly comparable, but help
contextualize benchmark difficulty.

\textbf{Results.}
Without temporal history, GR00T~N1.5 already surpasses HULC ($3.45$ vs.\ $3.07$), 
reflecting the strength of large-scale VLA pretraining. \equiactor{} raises the 
average sequence length to $3.89$ ($+0.44$), and \equivla{} reaches $4.03$ ($+0.58$), 
approaching MoDE~\citep{reuss2024mode} despite using single-frame observations. 
Gains are progressive across all five task positions and largest at the chain's 
tail: Task~5 improves from $48.5\%$ to $64.3\%$ ($+15.8\,\text{pp}$), suggesting that 
equivariance mitigates error accumulation in long-horizon execution.

\begin{wraptable}[10]{R}{0.6\textwidth}
  \centering
  \vspace{-13pt}
  \caption{%
    Average lengths (out of 5.0)
    and per-position success rates (\%) over 1000 chains on CALVIN ABCD$\to$D.
    $\dagger$: published results; $\star$: multi-frame. Tx denotes Task x.
  }
  \label{tab:calvin-results}
  \scriptsize
  \setlength{\tabcolsep}{5pt}
  \renewcommand{\arraystretch}{1.1}
\begin{tabular}{lc ccccc}
\toprule
    Method & T1 & T2 & T3 & T4 & T5 & Avg.\ $\uparrow$ \\
    \midrule
    HULC~\citep{mees2022matters}$^{\dagger\star}$
      & 88.9 & 73.3 & 58.7 & 47.5 & 38.3 & 3.07 \\
    MoDE~\citep{reuss2024mode}$^{\dagger\star}$
      & 97.1 & 92.5 & 87.9 & 83.5 & 77.9 & 4.39 \\
    \cmidrule(lr){1-1}
    GR00T N1.5~\citep{bjorck2025gr00t}
      & 89.0 & 79.2 & 68.7 & 59.4 & 48.5 & 3.45 \\
    GR00T N1.5 + \equiactor{}
      & 93.7 & 85.8 & 77.8 & 70.1 & 61.9 & 3.89 \\
    \rowcolor{blue!8} \textbf{\equivla{}} (ours)
      & \textbf{95.0} & \textbf{88.5} & \textbf{81.1}
      & \textbf{73.8} & \textbf{64.3} & \textbf{4.03} \\
      \bottomrule
\end{tabular}
\end{wraptable}



\subsection{Real Robot Experiments}
\label{sec:experiments:real}

\textbf{Setup.}
We validate \equivla{} on a mobile ALOHA
platform~\citep{fu2024mobile} across five tabletop manipulation tasks, each trained with 150 teleoperated demonstrations. Four tasks use only the right arm, while Shorts Folding requires bimanual control. We compare GR00T~N1.5 and \equivla{} under identical conditions, reporting average success rate over 20 trials per task.

\begin{figure*}[!ht]
  \centering
  \begin{subfigure}[b]{0.48\textwidth}
    \centering
    \includegraphics[width=0.32\linewidth,keepaspectratio]{figures/experiment/1_kitchen_place_banana}\hfill%
    \includegraphics[width=0.32\linewidth,keepaspectratio]{figures/experiment/4_kitchen_place_banana}\hfill%
    \includegraphics[width=0.32\linewidth,keepaspectratio]{figures/experiment/6_kitchen_place_banana}
    \caption{Banana in Pot}
    \label{fig:task-a}
  \end{subfigure}
  \hfill
  \begin{subfigure}[b]{0.48\textwidth}
    \centering
    \includegraphics[width=0.32\linewidth,keepaspectratio]{figures/experiment/1_drawer_placing}\hfill%
    \includegraphics[width=0.32\linewidth,keepaspectratio]{figures/experiment/2_drawer_placing}\hfill%
    \includegraphics[width=0.32\linewidth,keepaspectratio]{figures/experiment/6_drawer_placing}
    \caption{Block Storing}
    \label{fig:task-b}
  \end{subfigure}


  \begin{subfigure}[b]{0.32\textwidth}
    \centering
    \includegraphics[width=0.49\linewidth,keepaspectratio]{figures/experiment/1_stack_house}\hfill%
    \includegraphics[width=0.49\linewidth,keepaspectratio]{figures/experiment/6_stack_house}
    \caption{House Building}
    \label{fig:task-c}
  \end{subfigure}
  \hfill
  \begin{subfigure}[b]{0.32\textwidth}
    \centering
    \includegraphics[width=0.49\linewidth,keepaspectratio]{figures/experiment/1_letter_align}\hfill%
    \includegraphics[width=0.49\linewidth,keepaspectratio]{figures/experiment/6_letter_align}
    \caption{Letter Aligning}
    \label{fig:task-d}
  \end{subfigure}
  \hfill
  \begin{subfigure}[b]{0.32\textwidth}
    \centering
    \includegraphics[width=0.49\linewidth,keepaspectratio]{figures/experiment/1_short_folding}\hfill%
    \includegraphics[width=0.49\linewidth,keepaspectratio]{figures/experiment/6_short_folding}
    \caption{Shorts Folding}
    \label{fig:task-e}
  \end{subfigure}

  \caption{\textbf{Real-robot tasks.} For each task, the first and last frames show the initial and goal states, respectively. See~\Cref{app:realrobot_experimental_details} for full task descriptions.}
  \label{fig:real_robot_tasks}
\end{figure*}

\begin{table}[htbp]
    \centering
    \caption{Success rates (\%) on five manipulation tasks on a mobile ALOHA robot.}
    \label{tab:real-robot}
    \setlength{\tabcolsep}{4pt}
    \resizebox{\linewidth}{!}{
    \begin{tabular}{lccccc}
        \toprule
        Model & Banana in Pot & Block Storing & House Building & Letter Aligning & Shorts Folding \\
        \midrule
        \rowcolor{blue!8} \textbf{\equivla{}} (ours)  & \textbf{15/20} & \textbf{11/20} & \textbf{10/20} & \textbf{19/20} & \textbf{17/20} \\
        GR00T~N1.5 & 12/20 & 9/20 & 3/20 & 13/20 & \textbf{17/20} \\
        \bottomrule
    \end{tabular}
    }
\end{table}
\textbf{Results.}
\equivla{} matches or outperforms GR00T~N1.5 on all five tasks. The largest gains arise
on tasks requiring orientation-invariant grasping or placement: Letter Aligning, where
the `I' block appears at arbitrary orientations, yields $95\%$ vs.\ $65\%$ ($+30$\,pp);
House Building, where the triangular block must be grasped at the correct angle, despite
varied placement, yields $50\%$ vs.\ $15\%$ ($+35$\,pp). Conversely, Shorts Folding,
where the folding strategy is largely orientation-independent, shows identical
performance ($85\%$ for both), suggesting that the equivariant inductive bias incurs
no cost when rotational symmetry is absent. Averaged across all tasks, \equivla{}
achieves $72\%$ vs.\ $54\%$.

\section{Analysis and Ablation}
\label{sec:ablations}

\textbf{Equivariance Error.}
We validate~\Cref{thm:approx_equivariance} empirically by measuring
the equivariance error:
\begin{equation}
    \epsilon_{\mathrm{eq}} = \frac{1}{M|G|}\sum_{g,\mathbf{o}}
    \left\| \hat{\mathbf{a}}(g \cdot \mathbf{o}) -
    \rho_a(g)\hat{\mathbf{a}}(\mathbf{o})\right\|
    \label{eq:epsilon-eq}
\end{equation}
over $M{=}500$ LIBERO observations across all $g \in C_8$, where
$g \cdot \mathbf{o}$ rotates the top-down image and end-effector
pose while keeping the wrist image fixed. GR00T~N1.5
exhibits $\epsilon_{\mathrm{eq}} = 7.754 \pm 3.572$; adding
\equiactor{} reduces this by $9.3\times$ to $0.837 \pm 0.738$;
the full \equivla{} further reduces to $\mathbf{0.284 \pm 0.134}$
($27.3\times$ over baseline). Full breakdown is in
\Cref{app:empirical_equivariance}.

\textbf{Sample Efficiency.}
To assess whether the geometric inductive bias of \equivla{} translates
into data efficiency, we train three model variants: GR00T~N1.5,
GR00T~N1.5 + \equiactor{}, and \equivla{}, at three demonstration
budgets: $10\%$ (${\approx}5$ demos/task), $40\%$ (${\approx}20$ demos/task), and $100\%$ (${\approx}40$ demos/task) of the full LIBERO training set, while keeping all other hyperparameters identical~(see \Cref{app:training_details}). \equivla{} consistently
outperforms GR00T~N1.5 across all demonstration budgets, with gains of
$+1.8$, $+10.6$, and $+14.5$\,pp at $10\%$, $40\%$, and $100\%$
respectively, demonstrating that the $\SO(2)$ equivariance inductive
bias is most beneficial under limited demonstration data.

\begin{table}[htbp]
\centering
\begin{minipage}[t]{0.48\textwidth}
  \centering
  \caption{Avg. success rate on LIBERO (relative control)
           at varying demonstration budgets.}
  \label{tab:sample_efficiency}
  \setlength{\tabcolsep}{5pt}
  \renewcommand{\arraystretch}{1.08}
  \scriptsize
  \begin{tabular}{lccc}
    \toprule
    Method & 10\% & 40\% & 100\% \\
    \midrule
    GR00T N1.5                  & 58.4 & 73.9 & 78.1 \\
    GR00T N1.5 + \equiactor{}   & 58.8 & 84.1 & 91.0 \\
    \rowcolor{blue!8} \textbf{\equivla{}} (ours)        & \textbf{60.2} & \textbf{84.5} & \textbf{92.6} \\
    \bottomrule
  \end{tabular}
\end{minipage}
\hfill
\begin{minipage}[t]{0.5\textwidth}
  \centering
  \caption{Avg. success rate on LIBERO (relative control) using different symmetry groups.}
  \label{tab:ablation_group_order}
  \scriptsize
\begin{tabular}{lcccccc}
\toprule
    $C_N$ & 10 & Goal & Obj & Spat & Avg. $\uparrow$ & ms/step $\downarrow$ \\
    \midrule
    $C_4$    & 82.5 & 94.0 & 94.5 & 95.5 & 91.6 & \textbf{161} \\
    $C_8$    & \textbf{87.5} & 89.5 & 98.0 & 95.5 & 92.6 & 194 \\
    $C_{16}$ & 87.0 & \textbf{95.4} & \textbf{98.2} & \textbf{96.4} & \textbf{94.3} & 243 \\
    \bottomrule
\end{tabular}
\end{minipage}
\end{table}

\textbf{Group Choice Trade-offs.}
We ablate different choices of $C_N \in \{C_4, C_8, C_{16}\}$ for \equiactor{}'s
steerable layers across all four LIBERO suites, reporting success rates and
per-step wall-clock time on a single H100 in \autoref{tab:ablation_group_order}.
For reference, GR00T~N1.5 runs at $64$\,ms/step and GR00T~N1.5 + \equiactor{}
at $147$\,ms/step, confirming that steerable layers add only modest overhead over
the standard DiT head. The dominant cost in \equivla{} comes from
\equiperceptor{}'s $|G|$ ViT passes, adding $14$--$96$\, ms depending on group
size. $C_{16}$ achieves the highest accuracy ($94.3\%$) but at a $25\%$ latency
premium over $C_8$ for $+1.7$\,pp gain. $C_8$ offers the best
accuracy--latency trade-off ($92.6\%$, $194$\,ms/step) and is adopted as the
default group in \equivla{}.


\section{Conclusion}
\label{sec:discussion_conclusion}
\equivla{} demonstrates that $\SO(2)$ equivariance and large-scale
VLA pretraining are complementary when geometric structure is injected
at the right points in the pipeline. By symmetrizing the visual adapter
and action head while leaving the frozen VLM unmodified, we preserve
the rich visual and semantic priors of large-scale pretraining while
gaining systematic rotational generalization. The benefit scales with
data: \equivla{} outperforms the non-equivariant baseline even at
$10\%$ of demonstrations, with the gap widening at full data,
confirming consistent benefits regardless of scale. More broadly,
the results suggest that a modular approach to symmetrizing VLAs is
viable: geometric inductive biases need not be baked in from scratch
but can be composed onto existing pretrained systems. We hope this
work encourages the broader adoption of geometric structure as a
design principle for capable and data-efficient robot learning systems, especially for models with large pretrained language-vision backbones.

\textbf{Limitations.}
\emph{Symmetry scope:} we focus on $\SO(2)$ (planar rotation about the gravity axis), which is the dominant symmetry for tabletop manipulation with a top-down camera. Tasks involving out-of-plane rotations (e.g., wall-mounted manipulation) would require extending to $\SO(3)$ or $\mathrm{SE}(3)$, and since \texttt{escnn} supports
$\SO(3)$ steerable layers, adapting \equiactor{} to 3D symmetries
is a natural next step, though the increased representational
complexity of 3D irreducible representations warrants careful design.
\emph{Computational cost:} Token-level Frame Averaging requires 
$|G|{=}8$ forward passes through the frozen ViT, increasing 
\equivla{}'s inference latency to $194$\,ms/step vs.\ 
$64$\,ms/step for GR00T~N1.5; future work could explore stochastic frame
averaging~\citep{duval2023faenet}, which samples a random subset of
$G$ per step, to reduce this overhead while approximately preserving
equivariance guarantees.
\emph{Action head from scratch:} \equiactor{} cannot reuse 
pretrained DiT weights due to the structural constraints of 
steerable layers; distilling pretrained weights into equivariant 
architectures via knowledge distillation or equivariant fine-tuning 
is an important open problem.



\clearpage

\bibliography{references}

\clearpage
\appendix
\renewcommand\thesection{\Alph{section}}
\crefalias{section}{appendix}
\crefalias{subsection}{appendix}
\section*{Appendix Contents}
\startcontents[appendix]
\printcontents[appendix]{l}{1}{\setcounter{tocdepth}{2}}
\clearpage

\section{Real-Robot Environment Details}
\label{app:realrobot_experimental_details}

We evaluate \equivla{} on five real-robot manipulation tasks using the Mobile ALOHA platform~\citep{fu2024mobile}, a bimanual mobile manipulation system with two ViperX 6-DoF arms. Each task uses a top-down RGB camera and a wrist-mounted RGB camera ($640 \times 480$ resolution each) as visual inputs. Four tasks use only the right arm, while Shorts Folding requires bimanual manipulation. All experiments are conducted on a single workstation equipped with an NVIDIA RTX 4070 GPU. Each task is trained from $150$ teleoperated demonstrations. Representative rollout sequences are shown in~\Cref{fig:real_robot_rollout_gallery}. Task definitions and environment randomization details are provided below.

\begin{figure*}[ht]
\centering
\setlength{\tabcolsep}{1pt}
\renewcommand{\arraystretch}{0.8}
\begin{tabular}{c c c c c c}
\includegraphics[width=0.155\linewidth]{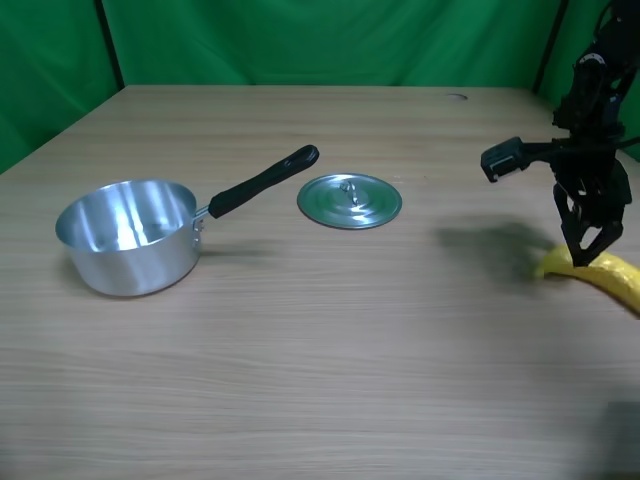} &
\includegraphics[width=0.155\linewidth]{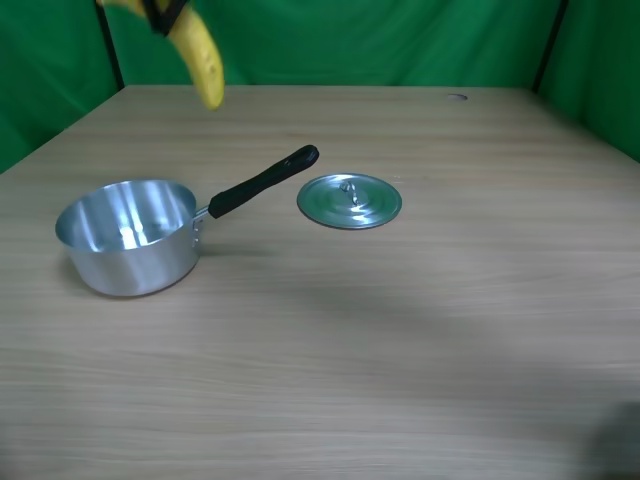} &
\includegraphics[width=0.155\linewidth]{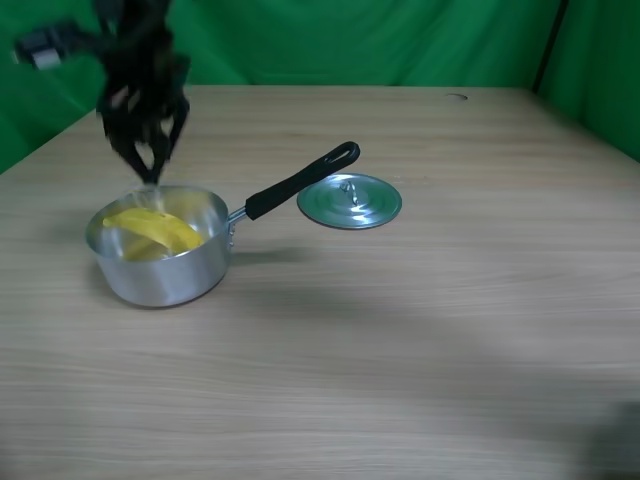} &
\includegraphics[width=0.155\linewidth]{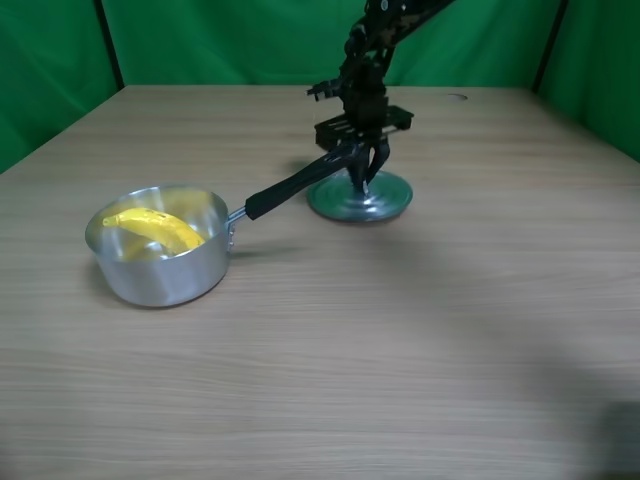} &
\includegraphics[width=0.155\linewidth]{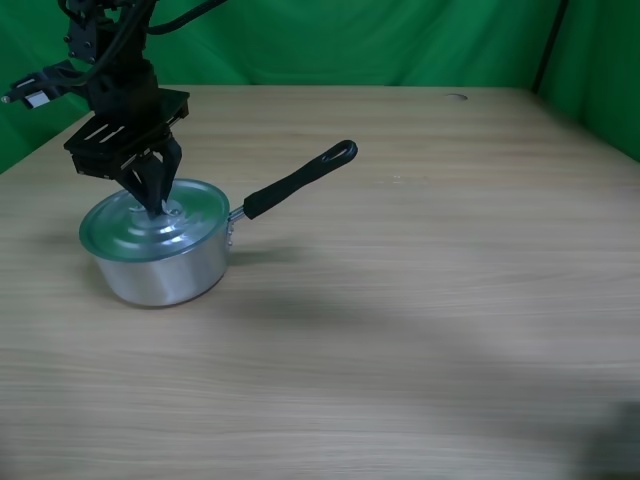} &
\includegraphics[width=0.155\linewidth]{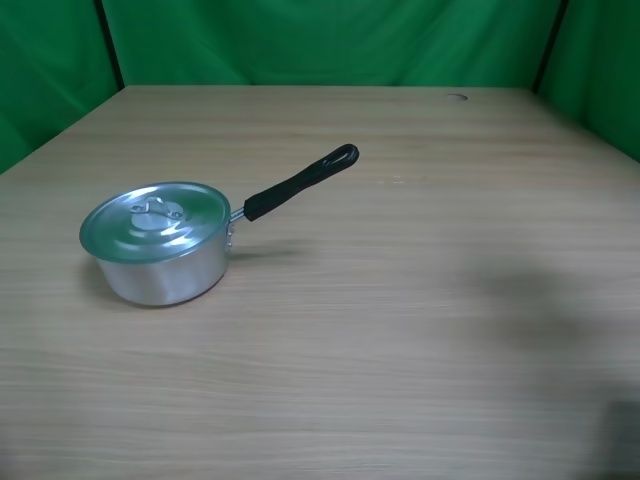} \\
\multicolumn{6}{c}{\small Banana in Pot} \\[2pt]

    \includegraphics[width=0.155\linewidth]{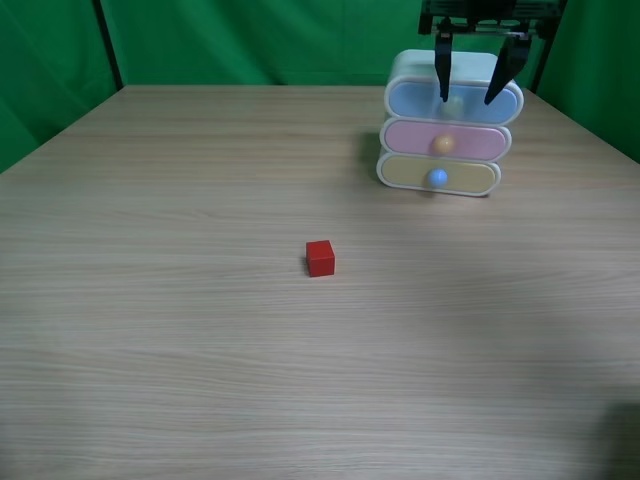} &
    \includegraphics[width=0.155\linewidth]{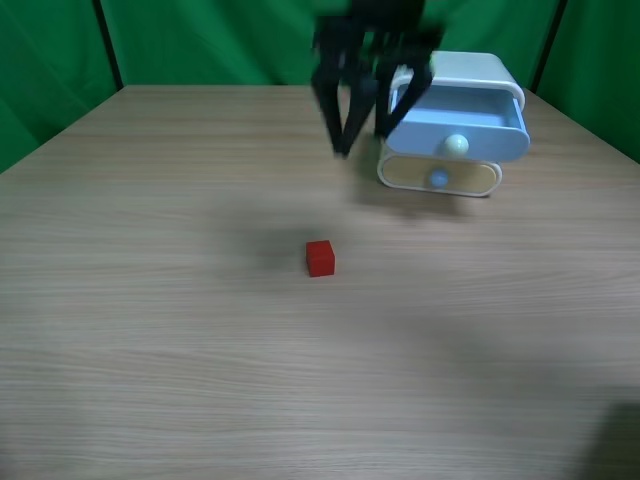} &
    \includegraphics[width=0.155\linewidth]{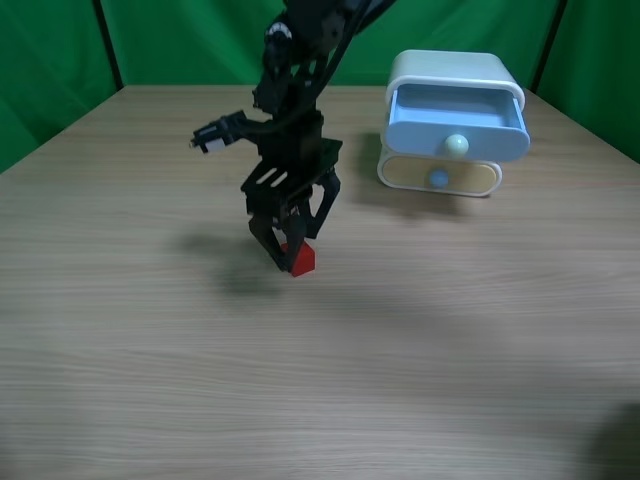} &
    \includegraphics[width=0.155\linewidth]{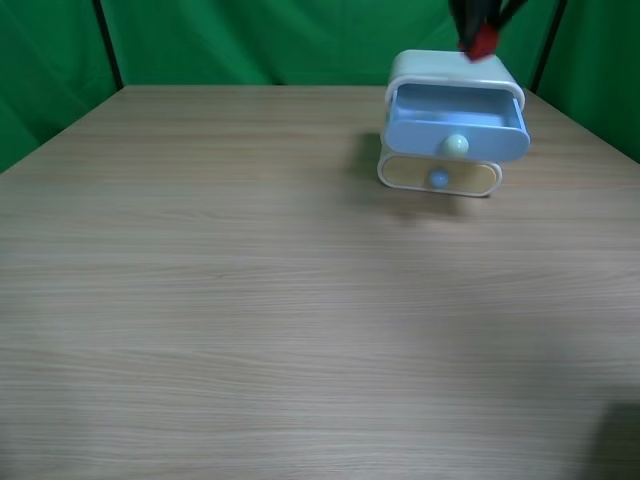} &
    \includegraphics[width=0.155\linewidth]{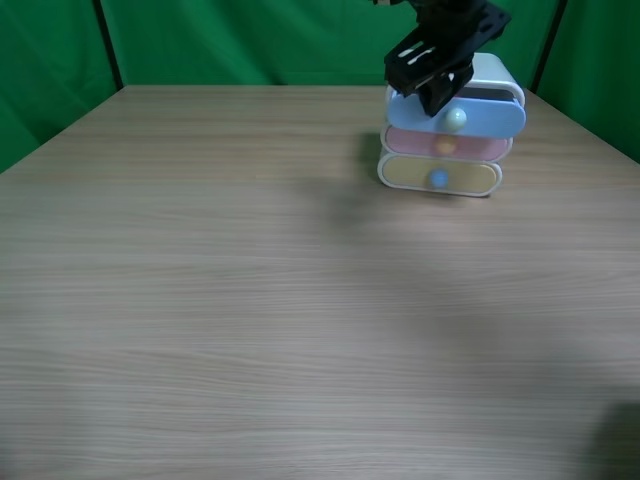} &
    \includegraphics[width=0.155\linewidth]{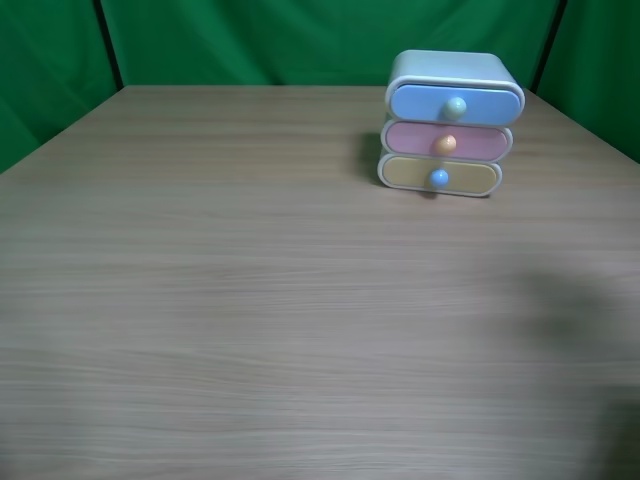} \\
    \multicolumn{6}{c}{\small Block Storing} \\[2pt]

    \includegraphics[width=0.155\linewidth]{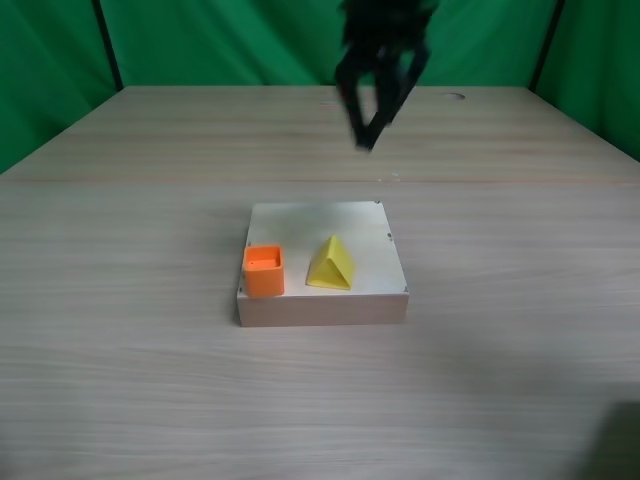} &
    \includegraphics[width=0.155\linewidth]{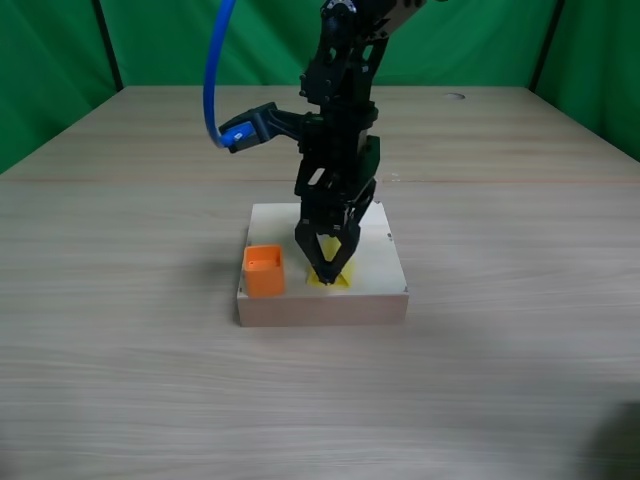} &
    \includegraphics[width=0.155\linewidth]{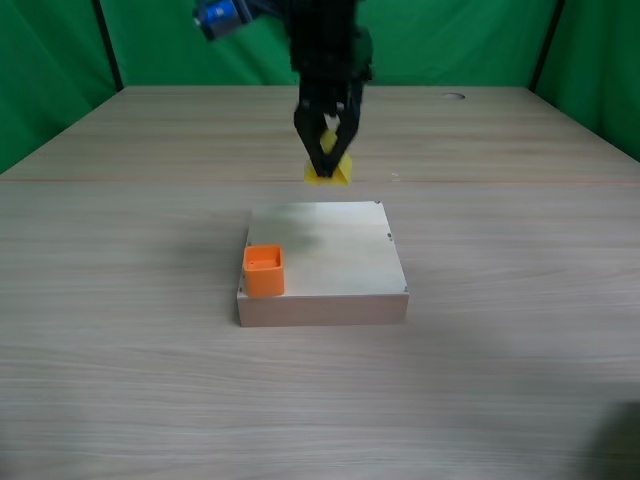} &
    \includegraphics[width=0.155\linewidth]{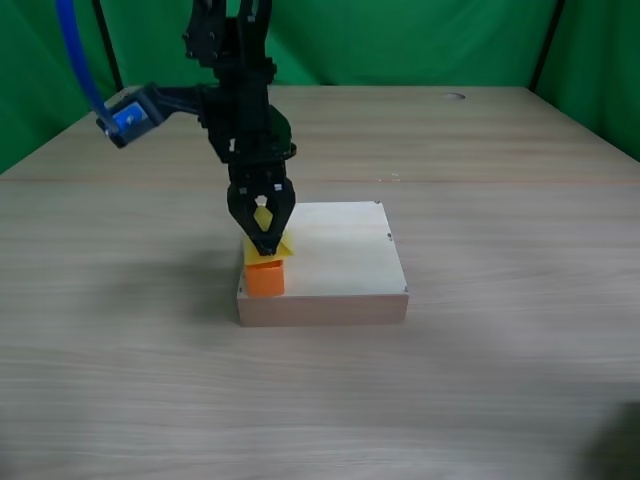} &
    \includegraphics[width=0.155\linewidth]{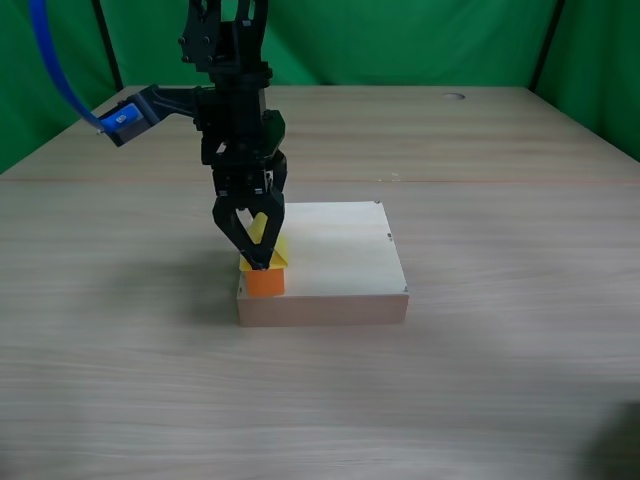} &
    \includegraphics[width=0.155\linewidth]{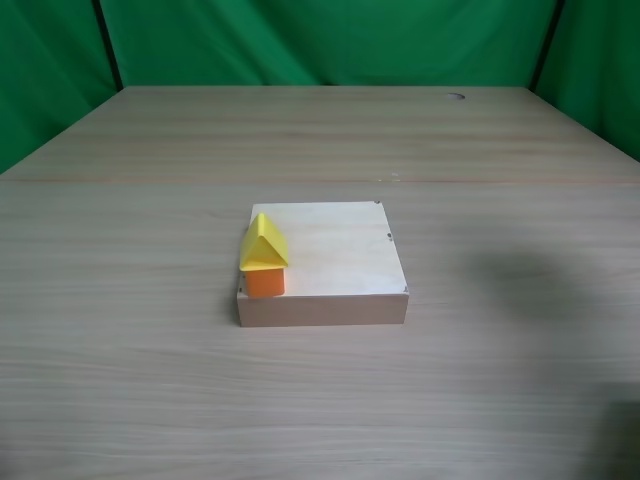} \\
    \multicolumn{6}{c}{\small House Building} \\[2pt]

    \includegraphics[width=0.155\linewidth]{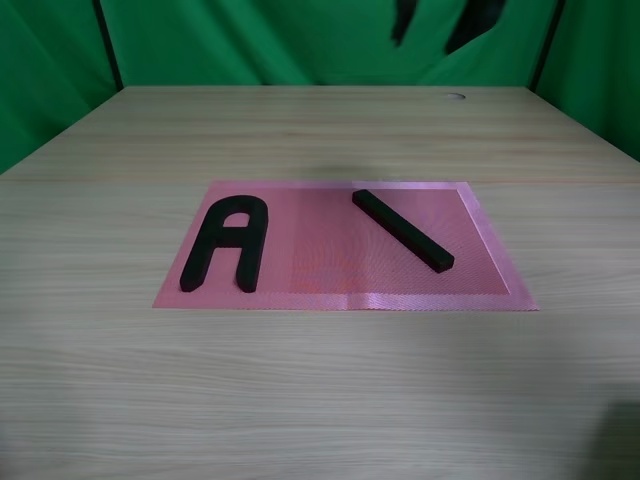} &
    \includegraphics[width=0.155\linewidth]{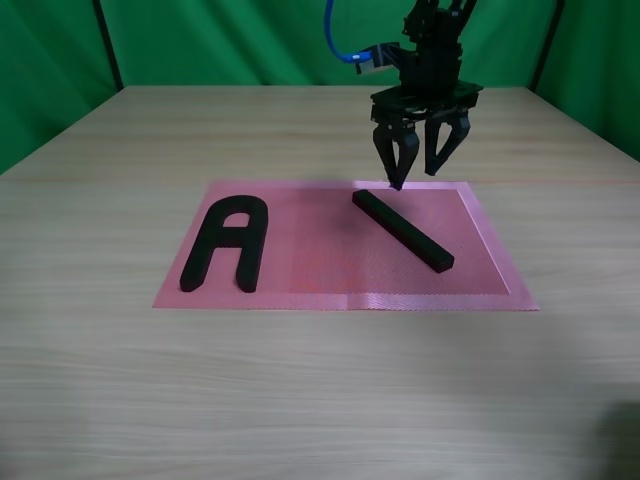} &
    \includegraphics[width=0.155\linewidth]{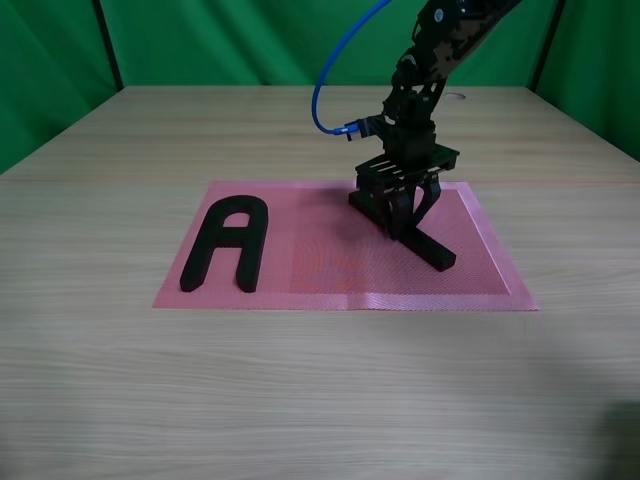} &
    \includegraphics[width=0.155\linewidth]{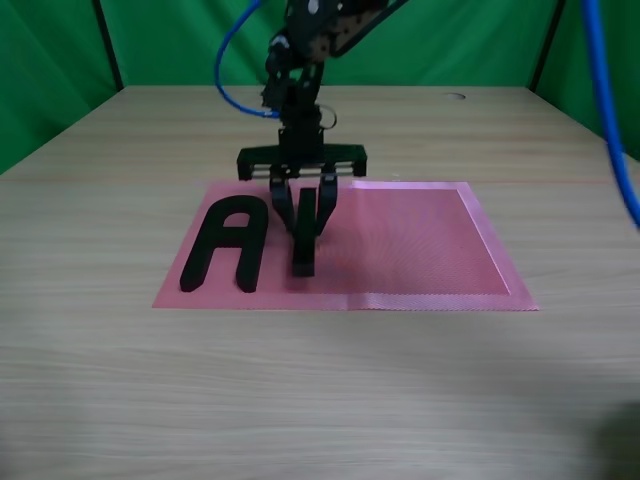} &
    \includegraphics[width=0.155\linewidth]{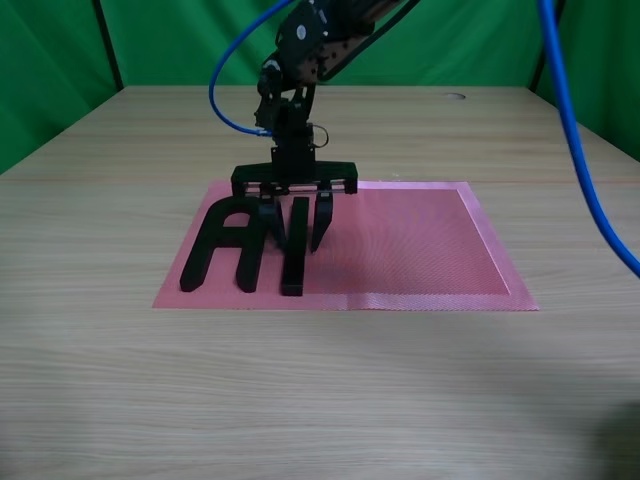} &
    \includegraphics[width=0.155\linewidth]{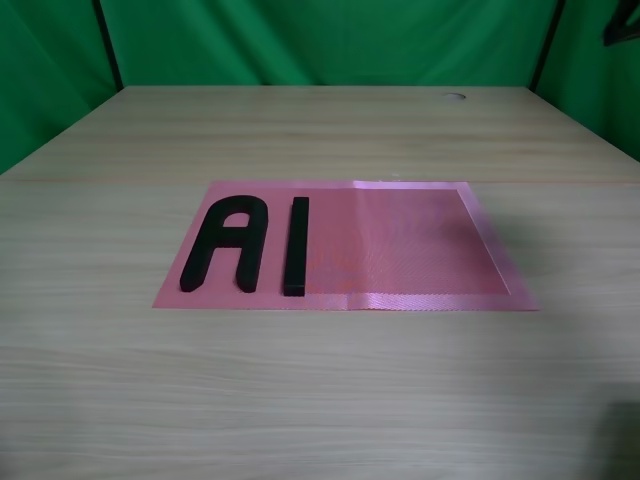} \\
    \multicolumn{6}{c}{\small Letter Aligning} \\[2pt]

    \includegraphics[width=0.155\linewidth]{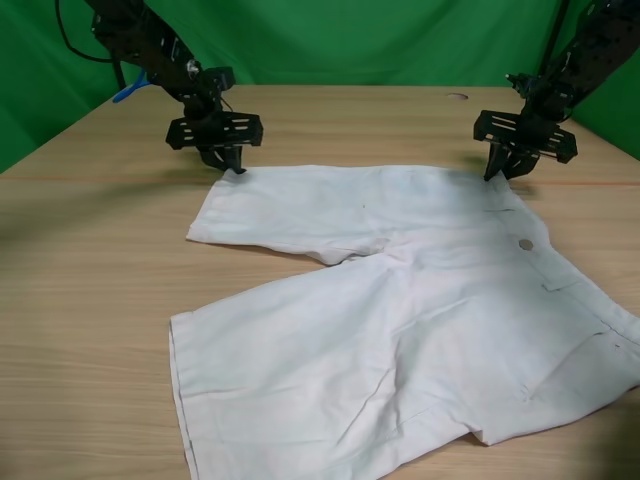} &
    \includegraphics[width=0.155\linewidth]{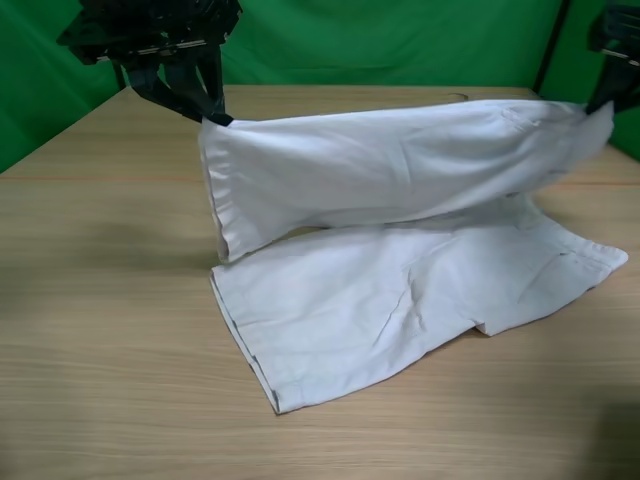} &
    \includegraphics[width=0.155\linewidth]{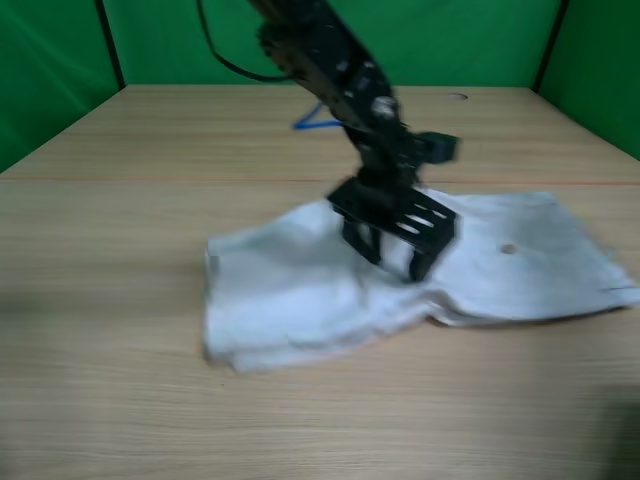} &
    \includegraphics[width=0.155\linewidth]{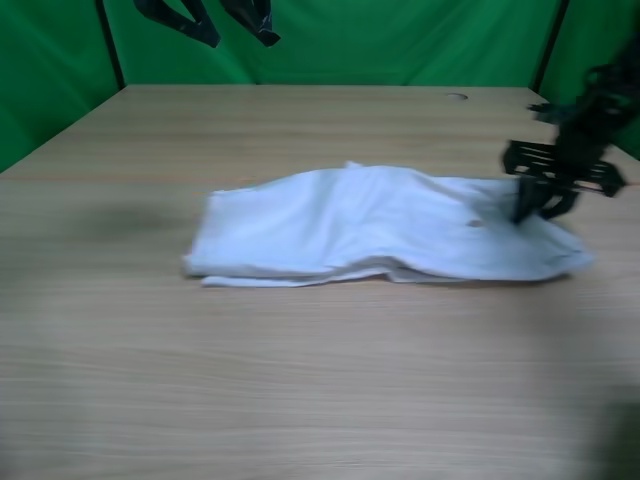} &
    \includegraphics[width=0.155\linewidth]{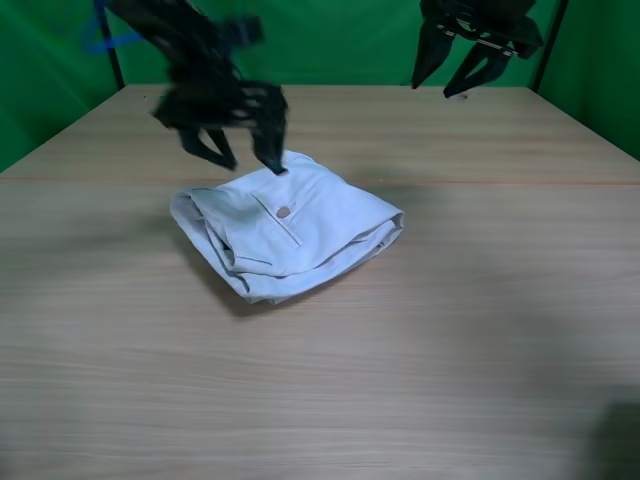} &
    \includegraphics[width=0.155\linewidth]{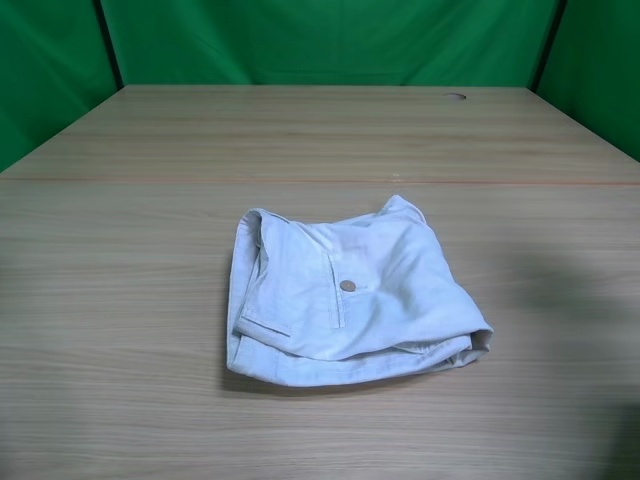} \\
    \multicolumn{6}{c}{\small Shorts Folding}
\end{tabular}
\caption{
Representative real-robot rollouts on the five Mobile ALOHA tasks.
Each row shows six keyframes from
initialization to task completion. The figures illustrate the
diversity of manipulation skills evaluated in our real-world benchmark,
including object placement, drawer interaction, precise orientation
alignment, stacking, and bimanual deformable-object manipulation.
}
\label{fig:real_robot_rollout_gallery}

\end{figure*}

\textbf{Banana in Pot:} The robot picks up a banana and places it into a pot, then grasps the pot lid and covers the pot. This task requires reasoning about spatial relationships among multiple objects, while exploiting the natural rotational symmetry of the pot and lid about the vertical axis. The banana, pot, and lid are initialized with random positions and orientations throughout the workspace.

\textbf{Block Storing:} The robot opens a drawer, picks up a square block, and places it inside the drawer, then closes the drawer. This sequential three-phase task requires precise interaction with a constrained target and represents a high-equivariant regime, where both the drawer and block can appear at varied orientations. The drawer and block are randomly placed at arbitrary positions within the workspace. The block is additionally randomized over full $360^\circ$ rotations.

\textbf{House Building:} The robot picks up the triangular block and stacks it on the square block to form a house-like structure. Success requires precise gripper orientation to grasp each triangular block at the correct angle, as the blocks appear at varied orientations and will slip if grasped incorrectly. Both the triangular and square blocks are randomly placed and oriented ($360^\circ$) on a square wooden platform.

\textbf{Letter Alignment:} A letter ``A'' block is placed at 
one of the four corners of the table, and a letter ``I'' block is 
placed at a random position and orientation elsewhere on the table. 
The robot must pick up the ``I'' block and align it next to the ``A'' 
block to form the word ``AI''. Since the ``I'' block appears at 
arbitrary orientations and must be precisely oriented relative to the 
fixed ``A'', this task represents the strongest test of rotational 
generalization in our real-robot suite.

\textbf{Shorts Folding (bimanual):} The robot picks up a pair of shorts 
from the table and folds them in half. This task involves deformable object manipulation with minimal rotational variation, as the folding strategy is largely independent of the shorts' initial orientation. The shorts are placed at the center of the table with small position perturbations ($\pm10$\,cm) and orientation perturbations ($\pm10^\circ$).

\section{Training Details}
\label{app:training_details}
\textbf{Flow-Matching Objective.}
\equiactor{} is trained with the conditional flow-matching objective~\citep{bjorck2025gr00t}. Given a demonstration action chunk $\mathbf{a}_t$, a noise sample $\boldsymbol{\epsilon} \sim \mathcal{N}(\mathbf{0}, \mathbf{I})$, and a flow time $k \sim \mathcal{U}[0,1]$, the noisy action is $\mathbf{a}^k_t = (1-k)\boldsymbol{\epsilon} + k\,\mathbf{a}_t$. The model predicts the velocity field $\mathbf{v}_{\theta}(\mathbf{a}^k_t, k, \mathbf{z}^{\mathrm{out}}, \mathbf{z}^s_t)$ and is trained to minimize:
\begin{equation}
    \mathcal{L} = \mathbb{E}_{k,\boldsymbol{\epsilon},\mathbf{a}_t}\left[
      \left\lVert \mathbf{v}_{\theta}(\mathbf{a}^k_t, k, \mathbf{z}^{\mathrm{out}}, \mathbf{z}^s_t)
      - (\mathbf{a}_t - \boldsymbol{\epsilon}) \right\rVert^2
    \right].
    \label{eq:flow_loss}
\end{equation}
At inference, actions are generated by integrating the learned velocity field from $\boldsymbol{\epsilon}$ to $\hat{\mathbf{a}}_t$ using an ODE solver.~The flow-matching objective is $G$-invariant and the ODE integration preserves equivariance, with a proof shown in~\Cref{prop:flow_equivariance}.

\textbf{Hyperparameters.}
We train all three variants (GR00T~N1.5, GR00T~N1.5 + \equiactor{}, 
and \equivla{}) with identical hyperparameters for each benchmark to 
ensure fair comparison. For LIBERO, the batch size and learning rate are set to $64$ and $1\times10^{-4}$. Training steps scale with the demonstration budget: $30$k steps for $100\%$ data, $15$k steps for $40\%$, and $10$k steps for $10\%$. For CALVIN, the batch size and maximum training steps are 64 and 120k steps, with the same learning rate. For real-robot tasks on Mobile ALOHA, models are fine-tuned from the publicly released GR00T~N1.5 3B checkpoint~\citep{bjorck2025gr00t} with batch size 32, learning rate $1\times10^{-4}$, and 150k steps. At inference, actions are decoded via Euler integration with $K{=}8$ denoising steps for simulation benchmarks and $K{=}4$ for real-robot tasks. All three variants are trained on an NVIDIA H100 GPU.

\textbf{Training Time.} GR00T~N1.5 takes approximately 5 hours for LIBERO and 19 hours for CALVIN; GR00T~N1.5 + \equiactor{} takes 11 and 43 hours respectively; \equivla{} takes 17 and 63 hours. For real-robot fine-tuning, GR00T~N1.5 and \equivla{} take approximately 18 and 60 hours, respectively.

\textbf{Architecture Details.}
\equiactor{} consists of $N{=}16$ equivariant DiT layers, each comprising an equivariant cross-attention block and an equivariant self-attention block. All linear projections are replaced by $G$-steerable layers operating in regular feature space: the hidden dimension $d$ is factored as $d = n_{\mathrm{channels}} \times |G|$, where $n_{\mathrm{channels}}{=}192$ is the number of regular feature channels and $|G|{=}8$ for $C_8$. The number of attention heads is inherited from GR00T~N1.5 and kept unchanged. The equivariant state encoder, action encoder, and action decoder share the same regular feature space of dimension $d$.

\section{Equivariance Analysis}
\label{app:equivariance_proof}

We provide a theoretical analysis of the equivariance guarantees of token-level Frame Averaging~(\Cref{sec:method:equiperceptor}), the Equivariant Adapter, and the end-to-end \equivla{} pipeline, deriving exact conditions under which equivariance holds, bounding the approximation error when it does not, and connecting the bound to sample complexity.

\textbf{Notation.}\;
Let $G = C_u$ be a cyclic group of $u$ planar rotations acting on images via $g \cdot \mathbf{x}$.
Let $f_\theta\!: \mathbb{R}^{H \times W \times 3} \to \mathbb{R}^{N \times D}$ be the frozen ViT producing $N$ patch tokens of dimension~$D$.
Define the \emph{output action} $A(h) = \tau(h) \otimes \rho_{\mathrm{reg}}(h)$, where $\tau(h) \in \mathrm{Perm}(N)$ is the spatial token permutation and $\rho_{\mathrm{reg}}(h)$ is the regular representation of~$G$.
The tensor product acts jointly on token positions ($\tau$) and feature channels ($\rho_{\mathrm{reg}}$), mapping $\mathbb{R}^{N \times D}$ into $\mathbb{R}^{N \times D \times |G|}$.
When $\tau$ is approximated by nearest-neighbor grid reassignment, we write $\tilde{\tau}$ and $\tilde{A}(h) = \tilde{\tau}(h) \otimes \rho_{\mathrm{reg}}(h)$.
We denote the full observation (camera images and proprioceptive state) by $\mathbf{o}$ and its image component by $\mathbf{x}$.

\subsection{Exact Equivariance}

\begin{proposition}[Exact Token-Level FA Equivariance]
\label{prop:exact_equivariance}
Suppose $\tau: G \to \mathrm{Perm}(N)$ is a group homomorphism, i.e., $\tau(gh) = \tau(g)\circ\tau(h)$ for all $g, h \in G$.
Then the token-level Frame Average
\begin{equation}
    \mathbf{z}^{\mathrm{eq}}(\mathbf{x})
    = \frac{1}{|G|}\sum_{h \in G} A(h^{-1}) \cdot f_\theta(h \cdot \mathbf{x})
\end{equation}
is exactly $G$-equivariant:
$\mathbf{z}^{\mathrm{eq}}(g \cdot \mathbf{x}) = A(g) \cdot \mathbf{z}^{\mathrm{eq}}(\mathbf{x})$
for all $g \in G$, regardless of the choice of~$f_\theta$.
\end{proposition}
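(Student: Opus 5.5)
The plan is the standard frame-averaging reindexing argument, adapted to the token-level output action. The first step is to show that $A: G \to \mathrm{GL}$ is itself a group homomorphism. By hypothesis $\tau(gh) = \tau(g)\circ\tau(h)$, and $\rho_{\mathrm{reg}}$ is a representation, so $\rho_{\mathrm{reg}}(gh) = \rho_{\mathrm{reg}}(g)\rho_{\mathrm{reg}}(h)$. The tensor product acts factorwise: $\tau$ acts on the token index and $\rho_{\mathrm{reg}}$ on the group-channel index. These are independent tensor factors, so the two actions commute, and by the mixed-product property of Kronecker products
\[
A(gh) = \tau(g)\tau(h) \otimes \rho_{\mathrm{reg}}(g)\rho_{\mathrm{reg}}(h) = A(g)A(h).
\]
It follows that $A(e) = I$ and $A(h^{-1}) = A(h)^{-1}$. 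Each $A(h)$ is linear, being a permutation tensored with a permutation matrix.

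The second step is to evaluate $\mathbf{z}^{\mathrm{eq}}$ at the transformed input $g\cdot\mathbf{x}$. Compatibility of the image action, $h\cdot(g\cdot\mathbf{x}) = (hg)\cdot\mathbf{x}$, gives
\[
\mathbf{z}^{\mathrm{eq}}(g\cdot\mathbf{x}) = \frac{1}{|G|}\sum_{h\in G} A(h^{-1})\, f_\theta((hg)\cdot\mathbf{x}).
\]
Substitute $k = hg$; since right multiplication by $g$ is a bijection of $G$, $k$ ranges over all of $G$. Then $h^{-1} = g k^{-1}$, and the homomorphism property gives $A(h^{-1}) = A(g)A(k^{-1})$. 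Because $A(g)$ is linear and does not depend on $k$, it factors out of the sum:
\[
\mathbf{z}^{\mathrm{eq}}(g\cdot\mathbf{x}) = A(g)\,\frac{1}{|G|}\sum_{k\in G} A(k^{-1})\, f_\theta(k\cdot\mathbf{x}) = A(g)\,\mathbf{z}^{\mathrm{eq}}(\mathbf{x}).
\]
No property of $f_\theta$ is used anywhere, which is why the result holds for any choice of $f_\theta$.

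The only real obstacle is making the object $A(h)$ well defined as a linear map. The notation says $A$ maps $\mathbb{R}^{N\times D}$ into $\mathbb{R}^{N\times D\times|G|}$, but then $A(g)$ cannot act on the output space of the sum, which already lives in $\mathbb{R}^{N\times D\times|G|}$. I would resolve this by fixing a $G$-equivariant lift $\iota: \mathbb{R}^{N\times D}\to\mathbb{R}^{N\times D\times|G|}$. One choice is to view $D = D'\cdot|G|$ as regular channels. Another is to lift via $\iota$ with $\iota(\mathbf{v}) = \mathbf{v}\otimes e_{\text{id}}$, i.e., placing the feature in the identity-element slot, as in lifting convolutions. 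With such a lift, every $A(h)$ is read as acting on the common space $\mathbb{R}^{N\times D\times|G|}$, where it is linear and satisfies $A(gh) = A(g)A(h)$. I would state this interpretation explicitly before running the two-line computation above, so that the factoring of $A(g)$ is legitimate. I would also note that the composition conventions for $\tau$ and for the image action must be consistent, so that $h\cdot(g\cdot\mathbf{x}) = (hg)\cdot\mathbf{x}$ matches $\tau(gh) = \tau(g)\circ\tau(h)$. This consistency is exactly what the homomorphism hypothesis provides, and it fails for nearest-neighbor $\tilde{\tau}$ in general.
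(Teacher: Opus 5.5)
Your proposal is correct and follows the paper's proof. Both use the reindexing $h' = hg$ and the homomorphism property $A(g(h')^{-1}) = A(g)\,A((h')^{-1})$, inherited from $\tau$ and $\rho_{\mathrm{reg}}$, to pull $A(g)$ out of the average; your remark on making every $A(h)$ act on a common space is a worthwhile clarification that the paper skips, though the lift $\iota$ need not be $G$-equivariant, only a fixed linear map independent of $h$.
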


\begin{proof}
\begin{align}
    \mathbf{z}^{\mathrm{eq}}(g \cdot \mathbf{x})
    &= \frac{1}{|G|}\sum_{h \in G} A(h^{-1}) \cdot f_\theta(hg \cdot \mathbf{x})
    \overset{h'=hg}{=}
    \frac{1}{|G|}\sum_{h'} A\bigl(g(h')^{-1}\bigr) \cdot f_\theta(h' \cdot \mathbf{x}) \nonumber\\
    &\overset{(\star)}{=}
    A(g)\,\frac{1}{|G|}\sum_{h'} A\bigl((h')^{-1}\bigr) \cdot f_\theta(h' \cdot \mathbf{x})
    = A(g) \cdot \mathbf{z}^{\mathrm{eq}}(\mathbf{x}),
    \label{eq:exact_fa_proof}
\end{align}
where $(\star)$ uses $A(g(h')^{-1}) = A(g)\,A((h')^{-1})$, which holds because $\tau$ is a homomorphism and $\rho_{\mathrm{reg}}$ is a group representation, so $A = \tau \otimes \rho_{\mathrm{reg}}$ is a homomorphism.
\end{proof}

\textbf{Invariance of $\mathbf{z}^{\mathrm{inv}}$.}
The invariant stream $\mathbf{z}^{\mathrm{inv}}(\mathbf{x}) = \frac{1}{|G|}\sum_{h \in G} f_\theta(h \cdot \mathbf{x})$ is exactly $G$-invariant for any~$f_\theta$, by the standard change-of-variable $h' = hg$:
$\mathbf{z}^{\mathrm{inv}}(g \cdot \mathbf{x})
= \frac{1}{|G|}\sum_h f_\theta(hg \cdot \mathbf{x})
= \frac{1}{|G|}\sum_{h'} f_\theta(h' \cdot \mathbf{x})
= \mathbf{z}^{\mathrm{inv}}(\mathbf{x})$~\citep{puny2021frame}.

\subsection{Approximate Equivariance Bound}

When $\tilde{\tau}$ is not a group homomorphism (as occurs for any 
$C_u$ with $u \nmid 4$ on a square patch grid), the Frame Average 
achieves only approximate equivariance. We introduce the 
\emph{representation defect} to quantify this gap.

\begin{definition}[Representation Defect]
\label{def:defect}
For the approximate output action $\tilde{A}(h) = \tilde{\tau}(h) \otimes \rho_{\mathrm{reg}}(h)$, define:
\begin{equation}
    D(g, h) \;\coloneqq\; \tilde{A}(gh) - \tilde{A}(g)\,\tilde{A}(h),
    \qquad g, h \in G.
\end{equation}
Since $\rho_{\mathrm{reg}}$ is an exact (unitary) representation, the defect reduces to the spatial component:
\begin{equation}
    \norm{D(g, h)}_{\mathrm{op}}
    = \norm{\tilde{\tau}(gh) - \tilde{\tau}(g)\circ\tilde{\tau}(h)}_{\mathrm{op}}.
    \label{eq:defect_spatial}
\end{equation}
The \emph{maximum defect} is $\Delta \coloneqq \max_{g,h \in G} \norm{D(g,h)}_{\mathrm{op}}$.
\end{definition}

\begin{theorem}[Approximate FA Equivariance Bound]
\label{thm:approx_equivariance}
Let $\tilde{\mathbf{z}}^{\mathrm{eq}}$ be the token-level FA with approximate spatial reassignment $\tilde{\tau}$. Then for all $g \in G$:
\begin{equation}
    \tilde{\mathbf{z}}^{\mathrm{eq}}(g \cdot \mathbf{x})
    = \tilde{A}(g) \cdot \tilde{\mathbf{z}}^{\mathrm{eq}}(\mathbf{x})
    + \boldsymbol{\varepsilon}(g, \mathbf{x}),
\end{equation}
where the equivariance residual satisfies:
\begin{equation}
    \norm{\boldsymbol{\varepsilon}(g, \mathbf{x})}
    \;\leq\;
    \frac{1}{|G|}\sum_{h \in G} \norm{D(g, h^{-1})}_{\mathrm{op}} \cdot \norm{f_\theta(h \cdot \mathbf{x})}
    \;\leq\;
    \Delta \cdot B(\mathbf{x}),
    \label{eq:approx_bound}
\end{equation}
with $B(\mathbf{x}) = \max_{h \in G} \norm{f_\theta(h \cdot \mathbf{x})}$.
In particular, $\Delta = 0$ implies exact equivariance.
\end{theorem}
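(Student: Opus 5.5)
The plan is to repeat the change-of-variables argument from the proof of \Cref{prop:exact_equivariance}, but with $\tilde{A}$ in place of $A$. At the step $(\star)$, where that proof used the homomorphism property, we instead keep the defect $D$ from \Cref{def:defect} as an explicit remainder.

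\textbf{Step 1: reindex the sum.} Write
\[
\tilde{\mathbf{z}}^{\mathrm{eq}}(g\cdot\mathbf{x}) = \frac{1}{|G|}\sum_{h} \tilde{A}(h^{-1})\, f_\theta(hg\cdot\mathbf{x}).
\]
Substitute $h' = hg$, so that $h^{-1} = g(h')^{-1}$. This gives
\[
\tilde{\mathbf{z}}^{\mathrm{eq}}(g\cdot\mathbf{x}) = \frac{1}{|G|}\sum_{h'} \tilde{A}\bigl(g(h')^{-1}\bigr)\, f_\theta(h'\cdot\mathbf{x}).
\]
We need $g\cdot(h\cdot\mathbf{x})$ to be a genuine group action on images. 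This holds for rotations about the image centre, so the composition of image rotations is exact; only the token reassignment $\tilde{\tau}$ is approximate.

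\textbf{Step 2: split off the defect.} By definition,
\[
\tilde{A}\bigl(g(h')^{-1}\bigr) = \tilde{A}(g)\,\tilde{A}\bigl((h')^{-1}\bigr) + D\bigl(g,(h')^{-1}\bigr).
\]
Substituting this and pulling $\tilde{A}(g)$ out of the sum by linearity gives the first term, $\tilde{A}(g)\,\tilde{\mathbf{z}}^{\mathrm{eq}}(\mathbf{x})$. What remains defines the residual:
\[
\boldsymbol{\varepsilon}(g,\mathbf{x}) = \frac{1}{|G|}\sum_{h} D(g,h^{-1})\, f_\theta(h\cdot\mathbf{x}).
\]
This is an exact identity, not an approximation.

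\textbf{Step 3: bound the residual.} Apply the triangle inequality and then $\norm{Mv}\le\norm{M}_{\mathrm{op}}\norm{v}$ to each term. This yields the first inequality in \eqref{eq:approx_bound}. Bounding each $\norm{D}_{\mathrm{op}}$ by $\Delta$ and each $\norm{f_\theta(h\cdot\mathbf{x})}$ by $B(\mathbf{x})$ leaves $|G|$ equal terms, and the $1/|G|$ cancels to give $\Delta\,B(\mathbf{x})$. If $\Delta=0$, every defect vanishes, so $\boldsymbol{\varepsilon}=0$ and exact equivariance follows, recovering \Cref{prop:exact_equivariance}.

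\textbf{Main obstacle.} The only delicate point is bookkeeping of norms. Strictly, $\tilde{A}(h)$ acts on $f_\theta(\cdot)\in\mathbb{R}^{N\times D}$ and lands in $\mathbb{R}^{N\times D\times|G|}$, through some lifting of features into the regular representation. The operator norm of $D$ must therefore be taken with respect to compatible norms on the input and output spaces, for example Frobenius norms, with the lift chosen to be an isometry.

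The reduction \eqref{eq:defect_spatial} is stated in the definition, so I would simply invoke it; it is not needed for the bound itself. If I had to justify it, I would factor
\[
D(g,h) = \bigl[\tilde{\tau}(gh) - \tilde{\tau}(g)\tilde{\tau}(h)\bigr] \otimes \rho_{\mathrm{reg}}(gh),
\]
and then use that the operator norm is multiplicative over tensor products and that $\rho_{\mathrm{reg}}(gh)$ is unitary.
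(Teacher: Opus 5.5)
Your proof is correct and is essentially the paper's own argument: reindex with $h'=hg$, split $\tilde{A}\bigl(g(h')^{-1}\bigr)=\tilde{A}(g)\tilde{A}\bigl((h')^{-1}\bigr)+D\bigl(g,(h')^{-1}\bigr)$ to isolate $\boldsymbol{\varepsilon}$ exactly, then bound it with the triangle inequality, $\norm{D}_{\mathrm{op}}\le\Delta$ and $\norm{f_\theta(h\cdot\mathbf{x})}\le B(\mathbf{x})$. Your extra remarks go beyond what the paper spells out, namely the isometric-lift bookkeeping and the factorization $D(g,h)=[\tilde{\tau}(gh)-\tilde{\tau}(g)\tilde{\tau}(h)]\otimes\rho_{\mathrm{reg}}(gh)$ justifying \eqref{eq:defect_spatial}; one caution is that $h\cdot(g\cdot\mathbf{x})=(hg)\cdot\mathbf{x}$ is an assumption the paper also takes for granted, holding exactly for images supported on a disk about the rotation centre but only approximately for interpolated, cropped $45^\circ$ pixel rotations.
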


\begin{proof}
Following the same substitution as~\Cref{eq:exact_fa_proof}:
\begin{align}
    \tilde{\mathbf{z}}^{\mathrm{eq}}(g \cdot \mathbf{x})
    &= \frac{1}{|G|}\sum_{h'} \tilde{A}\bigl(g(h')^{-1}\bigr) \cdot f_\theta(h' \cdot \mathbf{x}).
\end{align}
Decomposing via the defect $\tilde{A}(g(h')^{-1}) = \tilde{A}(g)\,\tilde{A}((h')^{-1}) + D(g, (h')^{-1})$:
\begin{align}
    &= \tilde{A}(g)\cdot
    \underbrace{\frac{1}{|G|}\sum_{h'}\tilde{A}\bigl((h')^{-1}\bigr) \cdot f_\theta(h' \cdot \mathbf{x})}_{= \;\tilde{\mathbf{z}}^{\mathrm{eq}}(\mathbf{x})}
    \;+\;
    \underbrace{\frac{1}{|G|}\sum_{h'} D\bigl(g, (h')^{-1}\bigr) \cdot f_\theta(h' \cdot \mathbf{x})}_{= \;\boldsymbol{\varepsilon}(g, \mathbf{x})}.
\end{align}
The first bound follows from the triangle inequality.
The second uses $\norm{D(g, h^{-1})}_{\mathrm{op}} \leq \Delta$ and $\norm{f_\theta(h \cdot \mathbf{x})} \leq B(\mathbf{x})$.
\end{proof}

\subsection{Geometric Characterization of the Defect}

We characterize when the representation defect vanishes, bound its magnitude for the groups used in practice, and formally connect it to the spatial displacement error.

\begin{definition}[Maximum Spatial Displacement]
\label{def:displacement}
For $h \in G$ and patch center $\mathbf{p}_i$ on an $n \times n$ grid with spacing~$p$, define the spatial displacement
$\delta_h(i) = \norm{\mathbf{p}_{\tilde{\tau}(h)(i)} - h \cdot \mathbf{p}_i}_2$
and the maximum displacement
$\delta_{\max} = \max_{h \in G,\; i \in [N]} \delta_h(i)$.
For interior patches whose rotated positions remain within the grid's convex hull, $\delta_h(i) \leq p\sqrt{2}/2$ (half-diagonal of a grid cell).
Near boundaries, $\delta_h(i)$ can exceed this bound when rotated positions fall outside the grid~(see~\Cref{tab:defect_values}).
\end{definition}

\begin{proposition}[Grid-Aligned Rotations]
\label{prop:exact_cases}
Let $G = C_u$ act on a square $n \times n$ patch grid centered at the image center.
\begin{enumerate}[label=\textup{(\alph*)},leftmargin=*,itemsep=1pt]
    \item \textbf{Exact case.}\;
    $\delta_{h}(i) = 0\;\forall\, h, i \;\Longrightarrow\; \Delta = 0$ (exact equivariance).
    On a square grid, this holds when $u \mid 4$, since $90^\circ$ rotations map patch centers exactly onto other patch centers.
    \item \textbf{$C_4$ subgroup.}\;
    For $C_8$, the $C_4$ subgroup $\set{0^\circ, 90^\circ, 180^\circ, 270^\circ}$ satisfies $\delta_h = 0$ for all positions and all $h \in C_4$.
    The defect arises solely from the $45^\circ$-offset coset $\set{45^\circ, 135^\circ, 225^\circ, 315^\circ}$.
    \item \textbf{Coset structure.}\;
    For $g \in C_4 \leq C_8$, the defect $D(g, h)$ vanishes whenever $h \in C_4$, since both $\tilde{\tau}(g)$ and $\tilde{\tau}(h)$ are exact permutations and $\tilde{\tau}(gh)$ is exact by closure.
    Thus at most $|G \setminus C_4| = 4$ terms in the sum~\eqref{eq:approx_bound} are non-zero, yielding the refined bound:
    \begin{equation}
        \norm{\boldsymbol{\varepsilon}(g, \mathbf{x})}
        \leq \frac{|G \setminus C_4|}{|G|} \cdot \Delta \cdot B(\mathbf{x})
        = \tfrac{1}{2}\,\Delta \cdot B(\mathbf{x}),
        \quad g \in C_4.
        \label{eq:refined_bound}
    \end{equation}
    \item \textbf{Permutation-norm bound.}\;
    Since $\tilde{\tau}(gh)$ and $\tilde{\tau}(g)\circ\tilde{\tau}(h)$ are both permutation matrices, their difference satisfies $\norm{D(g,h)}_{\mathrm{op}} \leq 2$ universally, and $\norm{D(g,h)}_F = \sqrt{2\,m(g,h)}$ where $m(g,h) = \bigl|\set{i : [\tilde{\tau}(g)\circ\tilde{\tau}(h)](i) \neq \tilde{\tau}(gh)(i)}\bigr|$ counts the number of positions where the composition of nearest-neighbor reassignments disagrees with the nearest-neighbor of the composed rotation.
    \item \textbf{Displacement--defect connection.}\;
    For any $g, h \in G$ and position $i \in [N]$, a disagreement
    $[\tilde{\tau}(g) \circ \tilde{\tau}(h)](i) \neq \tilde{\tau}(gh)(i)$
    can occur only if
    \begin{equation}
        \delta_h(i) + \delta_g\!\bigl(\tilde{\tau}(h)(i)\bigr) + \delta_{gh}(i) \;\geq\; p,
        \label{eq:displacement_defect}
    \end{equation}
    where $p$ is the grid spacing. Consequently $m(g,h) \leq \bigl|\bigl\{i : \delta_h(i) + \delta_g(\tilde{\tau}(h)(i)) + \delta_{gh}(i) \geq p \bigr\}\bigr|$.
\end{enumerate}
\end{proposition}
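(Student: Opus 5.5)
I will prove the five parts in order, treating each $\tilde{\tau}(h)$ as the $N\times N$ permutation matrix obtained by nearest-neighbor reassignment. Throughout I use \Cref{def:defect}, in particular the reduction \eqref{eq:defect_spatial}, and the residual formula from the proof of \Cref{thm:approx_equivariance}.

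For (a), suppose $\delta_h(i)=0$ for all $h,i$. Then $h\cdot\mathbf{p}_i$ is exactly the grid point $\mathbf{p}_{\tilde{\tau}(h)(i)}$, so $\tilde{\tau}(h)$ coincides with the exact permutation $\tau(h)$ induced by the rotation on the finite set of patch centers. The map $h\mapsto\tau(h)$ is a restriction of the group action to an invariant set, hence a homomorphism. Therefore $D\equiv 0$ and $\Delta=0$, and exact equivariance follows from \Cref{prop:exact_equivariance} (equivalently, \Cref{thm:approx_equivariance} with $\Delta=0$).

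For the square-grid claim, I center coordinates at the image center. Patch centers then lie in $\frac{p}{2}(2\mathbb{Z}+1)^2$ when $n$ is even, or in $p\,\mathbb{Z}^2$ when $n$ is odd, and the grid is the symmetric box $[-a,a]^2$. A $90^\circ$ rotation $(x,y)\mapsto(-y,x)$ preserves both the lattice and the box. So every $h\in C_u$ with $u\mid 4$ has $\delta_h\equiv 0$.

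Part (b) is the same argument applied to the subgroup $C_4\le C_8$. For the $45^\circ$ coset I only need to note that $\delta$ can be nonzero there; the statement "defect arises solely from the coset" is then made precise by (c).

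For (c), take $g,h\in C_4$. Then $gh\in C_4$, and by (b) all three of $\tilde{\tau}(g),\tilde{\tau}(h),\tilde{\tau}(gh)$ are exact, so by the homomorphism property from (a), $D(g,h)=0$. In the residual sum $\boldsymbol{\varepsilon}(g,\mathbf{x})=\frac1{|G|}\sum_h D(g,h^{-1})f_\theta(h\cdot\mathbf{x})$, the term for $h$ vanishes whenever $h^{-1}\in C_4$, which is equivalent to $h\in C_4$. At most $4$ of the $8$ terms survive, and bounding each by $\Delta B(\mathbf{x})$ gives the factor $\tfrac12$.

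For (d), I use that permutation matrices are orthogonal, so each has operator norm $1$; the triangle inequality then gives $\norm{D}_{\mathrm{op}}\le 2$. For the Frobenius norm, let $P,Q$ be the two permutation matrices. Then $\norm{P-Q}_F^2=\norm{P}_F^2+\norm{Q}_F^2-2\langle P,Q\rangle=2N-2(N-m)=2m$, since $\langle P,Q\rangle$ counts the columns where the two permutations agree.

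Part (e) is the geometric heart, and the step I expect to be the main obstacle. I argue by contraposition. Let $j=\tilde{\tau}(h)(i)$ and $k=\tilde{\tau}(g)(j)$. By the triangle inequality and the fact that $g$ is an isometry,
\[
\norm{\mathbf{p}_k-gh\cdot\mathbf{p}_i}\le\norm{\mathbf{p}_k-g\cdot\mathbf{p}_j}+\norm{g\cdot\mathbf{p}_j-g h\cdot\mathbf{p}_i}=\delta_g(j)+\delta_h(i).
\]
Let $l=\tilde{\tau}(gh)(i)$, so that $\norm{\mathbf{p}_l-gh\cdot\mathbf{p}_i}=\delta_{gh}(i)$. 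Then $\norm{\mathbf{p}_k-\mathbf{p}_l}\le\delta_h(i)+\delta_g(j)+\delta_{gh}(i)$. If $k\ne l$, distinct grid centers are at least $p$ apart, so this sum is at least $p$, which is \eqref{eq:displacement_defect}. The counting bound on $m(g,h)$ follows immediately.

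The delicate point is the convention for which index $\tilde{\tau}$ maps to which, that is, whether composition is read as $\tilde{\tau}(g)\circ\tilde{\tau}(h)$ acting on $i$ first via $h$. I will fix the convention matching \Cref{def:displacement}, so that $\delta_g$ is evaluated at $\tilde{\tau}(h)(i)$. I also need $\tilde{\tau}$ to be a genuine permutation near the boundary; I will assume ties and out-of-grid points are resolved bijectively, as the definition implicitly does.
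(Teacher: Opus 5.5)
Your proposal is correct and follows the paper's proof essentially step for step: a uniqueness/homomorphism argument for $\tilde{\tau}$ in (a)--(c), orthogonality of permutation matrices plus the triangle inequality in (d), and the isometry-plus-triangle-inequality contraposition in (e). The differences are cosmetic refinements on your side: you treat the half-integer lattice for even $n$ together with the symmetric box in (a), and you compute $\norm{P-Q}_F^2 = 2N - 2(N-m)$ via the inner product rather than by counting entries. Like the paper, you tacitly read the Frobenius claim in (d) as applying to the spatial factor of $D(g,h)$, and you assume $\tilde{\tau}(h)$ is a genuine permutation.
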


\begin{proof}
(a)~When all displacements vanish, every rotation maps each patch center exactly onto another, so $\tilde{\tau}(h)$ is the unique permutation that tracks $h$'s action on the grid.
Uniqueness forces $\tilde{\tau}(gh)(i) = [\tilde{\tau}(g) \circ \tilde{\tau}(h)](i)$ for all $i$, hence $\tilde{\tau}$ is a homomorphism and $\Delta = 0$.
On a square grid, the $90^\circ$ rotation $(x,y) \mapsto (-y, x)$ is a bijection on integer coordinates; all multiples of $90^\circ$ compose to grid bijections, so $C_4$ has $\delta_{\max} = 0$. In contrast, a $45^\circ$ rotation maps $(x,y) \mapsto \bigl(\frac{x-y}{\sqrt{2}}, \frac{x+y}{\sqrt{2}}\bigr)$, which generically leaves the integer lattice.

(b)~Follows from~(a) applied to the $C_4$ subgroup.

(c)~For $g, h \in C_4$: $\tilde{\tau}(g)$ and $\tilde{\tau}(h)$ are exact homomorphisms, and $gh \in C_4$ by closure, so $\tilde{\tau}(gh) = \tilde{\tau}(g) \circ \tilde{\tau}(h)$ and $D(g, h) = 0$.
The bound~\eqref{eq:refined_bound} follows by restricting the sum in~\eqref{eq:approx_bound} to $h \notin C_4$.

(d)~$\tilde{\tau}(gh)$ and $\tilde{\tau}(g)\circ\tilde{\tau}(h)$ are $N \times N$ permutation matrices.
Each position where they disagree contributes exactly $2$ to the squared Frobenius norm (two mismatched unit entries per row), hence $\norm{\cdot}_F = \sqrt{2m}$.
The operator norm bound $\leq 2$ follows from the triangle inequality on permutation operator norms.

(e)~The composed path $\tilde{\tau}(g) \circ \tilde{\tau}(h)$ maps $i$ to the nearest grid point to $g \cdot \mathbf{p}_{\tilde{\tau}(h)(i)}$, while $\tilde{\tau}(gh)$ maps $i$ to the nearest grid point to $(gh) \cdot \mathbf{p}_i$.
Since $g$ is an isometry:
\[
    \norm{g \cdot \mathbf{p}_{\tilde{\tau}(h)(i)} - (gh) \cdot \mathbf{p}_i}
    = \norm{\mathbf{p}_{\tilde{\tau}(h)(i)} - h \cdot \mathbf{p}_i}
    = \delta_h(i).
\]
By the triangle inequality, the distance between the two target grid points satisfies
\[
    \norm{\mathbf{p}_{[\tilde{\tau}(g)\circ\tilde{\tau}(h)](i)} - \mathbf{p}_{\tilde{\tau}(gh)(i)}}
    \;\leq\; \delta_h(i) + \delta_g\!\bigl(\tilde{\tau}(h)(i)\bigr) + \delta_{gh}(i).
\]
If the two target grid points are distinct, their distance is at least $p$ (the grid spacing), yielding~\eqref{eq:displacement_defect} by contraposition.
\end{proof}

\textbf{Computed defect values.}
\Cref{tab:defect_values} reports the representation defect for $C_8$ on standard ViT patch grids.
For all grid sizes, the $C_4$ subgroup has zero defect (confirming~\Cref{prop:exact_cases}(a--c)), while the $45^\circ$-offset coset contributes all non-zero terms.
The fraction of mismatched positions stabilizes around $31$--$35\%$ for worst-case $(g, h)$ pairs, with the mismatch count concentrated near grid boundaries where rotated patch centers exit the grid's convex hull.

\begin{table}[htbp]
\centering
\caption{Representation defect of $C_8$ on $n \times n$ patch grids (unit spacing $p = 1$). All $C_4 \times C_4$ pairs have $m = 0$; the $45^\circ$-offset coset is the sole source of non-zero defect.}
\label{tab:defect_values}
\small
\begin{tabular}{@{}ccccccc@{}}
\toprule
$n$ & $N$ & $\delta_{\max}$ & $\max\, m$ & $m / N$ & $\norm{D}_F$ & non-zero pairs \\
\midrule
14 & 196 & 2.74 & 68 & 34.7\% & 11.66 & 38/64 \\
16 & 256 & 3.15 & 80 & 31.2\% & 12.65 & 34/64 \\
24 & 576 & 4.79 & 176 & 30.6\% & 18.76 & 38/64 \\
32 & 1024 & 6.44 & 328 & 32.0\% & 25.61 & 38/64 \\
\bottomrule
\end{tabular}
\end{table}

\subsection{Component Equivariance: Adapter and Flow Matching}
\label{app:component_equivariance}

We establish the equivariance properties of the Equivariant Adapter~(\Cref{eq:adapter_out}) and the flow-matching training objective~\Cref{eq:flow_loss}, both of which feed into the end-to-end bound.

\begin{proposition}[Equivariant Adapter]
\label{prop:adapter_equivariance}
Let $\tilde{\mathbf{z}}^{\mathrm{eq}}$ be $G$-equivariant (in the regular representation), let $\mathbf{s}^{\mathrm{inv}}$, $\mathbf{s}^{\mathrm{lang}}$, $\mathbf{s}^{\mathrm{vis}}$ be $G$-invariant, let $\mathbf{W}_g$ be a $G$-equivariant linear map (regular-to-regular), and let $\mathbf{W}_s$ be a $G$-equivariant linear map.
Then the adapter output~\eqref{eq:adapter_out}
\begin{equation*}
    \mathbf{z}^{\mathrm{out}}_{\mathrm{eq}}
    = \boldsymbol{\alpha}^{\mathrm{reg}}
      \odot \mathbf{W}_s(\mathbf{s}^{\mathrm{inv}} \otimes \mathbf{1}_{|G|})
    + (\mathbf{1} - \boldsymbol{\alpha}^{\mathrm{reg}})
      \odot \mathbf{W}_g(\tilde{\mathbf{z}}^{\mathrm{eq}})
\end{equation*}
is $G$-equivariant: $\mathbf{z}^{\mathrm{out}}_{\mathrm{eq}}(g \cdot \mathbf{x}) = \rho_{\mathrm{reg}}(g) \cdot \mathbf{z}^{\mathrm{out}}_{\mathrm{eq}}(\mathbf{x})$.
\end{proposition}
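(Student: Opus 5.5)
The plan is to verify the equivariance term by term. The adapter output is a sum of two Hadamard products, each pairing an invariant gate with a regular-representation feature. We need three facts. First, the gate $\boldsymbol{\alpha}^{\mathrm{reg}}$ is invariant and is constant across the $|G|$ group channels of each regular block. Second, for such a channel-constant vector $\mathbf{c}$, the Hadamard product commutes with $\rho_{\mathrm{reg}}(g)$. Third, each branch inside the products is equivariant. Linearity of $\rho_{\mathrm{reg}}(g)$ then combines the two terms.

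\textbf{Gate invariance.} The inputs $\mathbf{s}^{\mathrm{lang}}$ and $\mathbf{s}^{\mathrm{vis}}$ are invariant by hypothesis. The pooled term $\bar{\mathbf{z}}^{\mathrm{eq}} = \mathrm{mean}_G(\tilde{\mathbf{z}}^{\mathrm{eq}})$ is also invariant: averaging the $|G|$ coordinates of a regular block is unchanged by the cyclic permutation $\rho_{\mathrm{reg}}(g)$, since that permutation only reorders the summands. Hence $\boldsymbol{\alpha} = \sigma(\mathbf{W}_{\mathrm{gate}}[\cdot])$ is a function of invariant quantities and is itself invariant. Consequently $\boldsymbol{\alpha}^{\mathrm{reg}} = \boldsymbol{\alpha}\otimes\mathbf{1}_{|G|}$, and likewise $\mathbf{1}-\boldsymbol{\alpha}^{\mathrm{reg}}$, are invariant and constant within each regular block.

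\textbf{Hadamard commutation.} The key lemma is $\mathbf{c}\odot(\rho_{\mathrm{reg}}(g)\mathbf{v}) = \rho_{\mathrm{reg}}(g)(\mathbf{c}\odot\mathbf{v})$ whenever $\mathbf{c}$ is constant on each block. This holds because $\rho_{\mathrm{reg}}(g)$ permutes coordinates only within each block, and a within-block permutation leaves a within-block-constant vector unchanged.

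\textbf{Branch equivariance.} The semantic branch $\mathbf{s}^{\mathrm{inv}}\otimes\mathbf{1}_{|G|}$ is invariant, since $\mathbf{s}^{\mathrm{inv}}$ is invariant. It is also a fixed point of $\rho_{\mathrm{reg}}(g)$, because it is block-constant. Applying the equivariant map $\mathbf{W}_s$ gives $\mathbf{W}_s(\mathbf{s}^{\mathrm{inv}}\otimes\mathbf{1})$, which is a fixed point of $\rho_{\mathrm{reg}}(g)$ for the same data, so it transforms correctly, trivially. The equivariant branch gives $\mathbf{W}_g(\tilde{\mathbf{z}}^{\mathrm{eq}}(g\cdot\mathbf{x})) = \mathbf{W}_g(\rho_{\mathrm{reg}}(g)\tilde{\mathbf{z}}^{\mathrm{eq}}(\mathbf{x})) = \rho_{\mathrm{reg}}(g)\mathbf{W}_g(\tilde{\mathbf{z}}^{\mathrm{eq}}(\mathbf{x}))$.

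\textbf{Combination.} Substituting $g\cdot\mathbf{x}$, pulling $\rho_{\mathrm{reg}}(g)$ out of both Hadamard products via the lemma, and using linearity yields $\rho_{\mathrm{reg}}(g)\,\mathbf{z}^{\mathrm{out}}_{\mathrm{eq}}(\mathbf{x})$.

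The main subtlety is the semantic branch. Writing it as "$\mathbf{W}_s(\mathbf{s}^{\mathrm{inv}}\otimes\mathbf{1})$ is equivariant" only yields $\rho_{\mathrm{reg}}(g)$-covariance if the input genuinely transforms under $\rho_{\mathrm{reg}}$. It does: a block-constant vector is fixed by $\rho_{\mathrm{reg}}(g)$, and an equivariant map sends fixed points to fixed points. So this term is invariant, and an invariant term is consistent with the claimed equation only because it is fixed by $\rho_{\mathrm{reg}}(g)$.

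I will make explicit that the claim reads $\mathbf{z}^{\mathrm{out}}_{\mathrm{eq}}(g\cdot\mathbf{x}) = \rho_{\mathrm{reg}}(g)\mathbf{z}^{\mathrm{out}}_{\mathrm{eq}}(\mathbf{x})$ with the token permutation suppressed. If $\tau(g)$ is included, it acts on token indices and commutes with every per-token operation. The exception is the $\mathrm{mean}_N$ pooling inside the summaries, but that pooling is itself permutation-invariant. Nothing breaks either way. I will also note that if $\tilde{\mathbf{z}}^{\mathrm{eq}}$ is only approximately equivariant (per \Cref{thm:approx_equivariance}), the same chain propagates the residual through the bounded maps $\mathbf{W}_g$ and the gate, which is what the end-to-end corollary will need.
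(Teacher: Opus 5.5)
Your proposal is correct and follows the paper's proof essentially step for step. The shared steps are: invariance of the tiled gate, invariance of the semantic branch $\mathbf{W}_s(\mathbf{s}^{\mathrm{inv}}\otimes\mathbf{1}_{|G|})$ as a fixed point of $\rho_{\mathrm{reg}}(g)$, equivariance of $\mathbf{W}_g$, and commutation of $\rho_{\mathrm{reg}}(g)$ with Hadamard products against the channel-constant gate. Your explicit check that $\bar{\mathbf{z}}^{\mathrm{eq}} = \mathrm{mean}_G(\tilde{\mathbf{z}}^{\mathrm{eq}})$ is invariant fills in a step that the paper only asserts; it is not among the proposition's stated hypotheses.
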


\begin{proof}
Since $\boldsymbol{\alpha} = \sigma(\mathbf{W}_{\mathrm{gate}}[\mathbf{s}^{\mathrm{lang}}; \mathbf{s}^{\mathrm{vis}}; \bar{\mathbf{z}}^{\mathrm{eq}}])$ is composed entirely of invariant inputs, $\boldsymbol{\alpha}$ is $G$-invariant.
Its regular-representation tiling $\boldsymbol{\alpha}^{\mathrm{reg}} = \boldsymbol{\alpha} \otimes \mathbf{1}_{|G|}$ is invariant under $\rho_{\mathrm{reg}}(g)$, since $\rho_{\mathrm{reg}}(g)$ cyclically permutes coordinates and $\mathbf{1}_{|G|}$ is fixed by any permutation.

\emph{Semantic branch.}
$\mathbf{s}^{\mathrm{inv}} \otimes \mathbf{1}_{|G|}$ lies in regular-representation space but is $G$-invariant (constant across group channels).
Since $\mathbf{W}_s$ is equivariant, $\mathbf{W}_s(\mathbf{s}^{\mathrm{inv}} \otimes \mathbf{1}_{|G|})$ is also $G$-invariant.

\emph{Equivariant branch.}
$\mathbf{W}_g$ is $G$-equivariant: $\mathbf{W}_g(\rho_{\mathrm{reg}}(g)\tilde{\mathbf{z}}^{\mathrm{eq}}) = \rho_{\mathrm{reg}}(g) \mathbf{W}_g(\tilde{\mathbf{z}}^{\mathrm{eq}})$.

\emph{Combination.}
Since $\boldsymbol{\alpha}^{\mathrm{reg}}$ is invariant and $\rho_{\mathrm{reg}}(g)$ is a coordinate permutation, Hadamard products commute:
$\rho_{\mathrm{reg}}(g)[\boldsymbol{\alpha}^{\mathrm{reg}} \odot \mathbf{v}] = \boldsymbol{\alpha}^{\mathrm{reg}} \odot \rho_{\mathrm{reg}}(g)\mathbf{v}$ for any $\mathbf{v}$.
Applying this to both branches:
\begin{align*}
    \rho_{\mathrm{reg}}(g)\,\mathbf{z}^{\mathrm{out}}_{\mathrm{eq}}
    &= \boldsymbol{\alpha}^{\mathrm{reg}} \odot \underbrace{\rho_{\mathrm{reg}}(g) \mathbf{W}_s(\mathbf{s}^{\mathrm{inv}} \otimes \mathbf{1}_{|G|})}_{\text{invariant, unchanged}}
    + (\mathbf{1} - \boldsymbol{\alpha}^{\mathrm{reg}}) \odot \underbrace{\rho_{\mathrm{reg}}(g)\mathbf{W}_g(\tilde{\mathbf{z}}^{\mathrm{eq}})}_{ = \,\mathbf{W}_g(\rho_{\mathrm{reg}}(g)\tilde{\mathbf{z}}^{\mathrm{eq}})},
\end{align*}
which equals $\mathbf{z}^{\mathrm{out}}_{\mathrm{eq}}$ evaluated at $g \cdot \mathbf{x}$, since $\tilde{\mathbf{z}}^{\mathrm{eq}}(g \cdot \mathbf{x}) = \rho_{\mathrm{reg}}(g)\tilde{\mathbf{z}}^{\mathrm{eq}}(\mathbf{x})$ and all invariant quantities are unchanged.
\end{proof}

\begin{proposition}[Flow-Matching Compatibility with Equivariance]
\label{prop:flow_equivariance}
Let $\rho_a$ denote the $\SO(2)$ representation acting on the action space, and suppose {\normalfont\equiactor{}} implements a $G$-equivariant velocity field $\mathbf{v}_\theta$:
\begin{equation}
    \mathbf{v}_\theta\!\bigl(\rho_a(g)\mathbf{a}^k,\; k,\; \tilde{A}(g)\mathbf{z},\; \rho(g)\mathbf{z}^s\bigr)
    = \rho_a(g)\,\mathbf{v}_\theta(\mathbf{a}^k, k, \mathbf{z}, \mathbf{z}^s)
    \quad \forall\, g \in G.
    \label{eq:velocity_equivariance}
\end{equation}
Then:
\begin{enumerate}[label=\textup{(\roman*)},leftmargin=*,itemsep=1pt]
    \item The flow-matching loss~\eqref{eq:flow_loss} is $G$-invariant in distribution: for any $g \in G$,
    $\mathcal{L}(g \cdot \mathbf{o}, \rho_a(g)\mathbf{a}_t) = \mathcal{L}(\mathbf{o}, \mathbf{a}_t)$ in expectation over $\boldsymbol{\epsilon}$ and~$k$.
    \item The ODE integration $\hat{\mathbf{a}}_t = \mathrm{ODESolve}(\mathbf{v}_\theta, \boldsymbol{\epsilon}, 0 \to 1)$ satisfies $\hat{\mathbf{a}}_t(g \cdot \mathbf{o}, \rho(g)\mathbf{z}^s, \rho_a(g)\boldsymbol{\epsilon}) = \rho_a(g)\hat{\mathbf{a}}_t(\mathbf{o}, \mathbf{z}^s, \boldsymbol{\epsilon})$.
\end{enumerate}
\end{proposition}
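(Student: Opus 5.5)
For part (i), I start from the definition of the loss at a rotated sample $(g\cdot\mathbf{o},\rho_a(g)\mathbf{a}_t)$. The context tokens are computed from $g\cdot\mathbf{o}$. Under the hypothesis of \Cref{eq:velocity_equivariance}, they transform as $\tilde{A}(g)\mathbf{z}$, and the state embedding transforms as $\rho(g)\mathbf{z}^s$. Here I treat the perceptor output as exactly equivariant, the $\Delta=0$ idealization of \Cref{thm:approx_equivariance}.

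The key step is a change of variables in the noise, $\boldsymbol{\epsilon}=\rho_a(g)\boldsymbol{\epsilon}'$. This requires $\rho_a(g)$ to be orthogonal. That holds in the absolute-control case, which is a direct sum of rotation blocks and trivial blocks. In the relative-control case it also needs the reordering $P$ to be a permutation, so that $P^{-1}=P^{\top}$. Given orthogonality, the isotropic Gaussian $\mathcal{N}(\mathbf{0},\mathbf{I})$ is invariant, so $\boldsymbol{\epsilon}'$ has the same law as $\boldsymbol{\epsilon}$, and $k$ is untouched. The noisy action then becomes
\[
(1-k)\rho_a(g)\boldsymbol{\epsilon}'+k\rho_a(g)\mathbf{a}_t=\rho_a(g)\mathbf{a}^k_t ,
\]
and the regression target becomes $\rho_a(g)(\mathbf{a}_t-\boldsymbol{\epsilon}')$. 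Applying \Cref{eq:velocity_equivariance}, the residual inside the norm equals $\rho_a(g)$ times the unrotated residual. Orthogonality preserves the squared norm, so the integrand matches pointwise after reparametrization, and the expectations agree.

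For part (ii), I define the flow $\mathbf{a}(k)$ solving $\dot{\mathbf{a}}=\mathbf{v}_\theta(\mathbf{a},k,\mathbf{z},\mathbf{z}^s)$ with $\mathbf{a}(0)=\boldsymbol{\epsilon}$, and set $\mathbf{b}(k)=\rho_a(g)\mathbf{a}(k)$. By linearity of $\rho_a(g)$ and by \Cref{eq:velocity_equivariance},
\[
\dot{\mathbf{b}}=\rho_a(g)\mathbf{v}_\theta(\mathbf{a},k,\mathbf{z},\mathbf{z}^s)=\mathbf{v}_\theta(\mathbf{b},k,\tilde{A}(g)\mathbf{z},\rho(g)\mathbf{z}^s),
\]
with $\mathbf{b}(0)=\rho_a(g)\boldsymbol{\epsilon}$. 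So $\mathbf{b}$ solves the rotated ODE from the rotated initial condition.

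For the continuous-time statement, uniqueness of solutions then gives the claim; this needs $\mathbf{v}_\theta$ to be Lipschitz in $\mathbf{a}$, which holds for a network with smooth activations. Since the paper actually uses Euler integration with $K$ steps, I will also verify the discrete version by induction on the step. Each update $\mathbf{a}_{j+1}=\mathbf{a}_j+\Delta k\,\mathbf{v}_\theta(\mathbf{a}_j,k_j,\cdot)$ commutes with $\rho_a(g)$ by the same linearity-plus-equivariance identity. This makes the result exact for the implemented solver, not just for the ODE.

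I expect the main obstacle to be bookkeeping rather than mathematics. First, I must make explicit that the conditioning inputs at $g\cdot\mathbf{o}$ really equal $\tilde{A}(g)\mathbf{z}$ and $\rho(g)\mathbf{z}^s$. This holds only when $\Delta=0$; otherwise I will state it as the assumption in force and defer the error to \Cref{thm:approx_equivariance}. Second, I must confirm that $\rho_a$ is orthogonal, which both steps rely on for Gaussian invariance and norm preservation.
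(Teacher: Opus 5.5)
Your proposal is correct and follows the paper's proof. Part (i) uses orthogonality of $\rho_a(g)$, rotational invariance of $\mathcal{N}(\mathbf{0},\mathbf{I})$ and norm preservation, and part (ii) uses induction over the Euler updates. Your additions are sound refinements rather than a different route: the explicit reparametrization $\boldsymbol{\epsilon}=\rho_a(g)\boldsymbol{\epsilon}'$, the continuous-time uniqueness argument, and the observation that replacing $g\cdot\mathbf{o}$ by $\tilde{A}(g)\mathbf{z}$ requires exact perceptor equivariance ($\Delta=0$), which the paper's proof assumes without stating.
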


\begin{proof}
(i)~The noisy action $\mathbf{a}^k = (1-k)\boldsymbol{\epsilon} + k\mathbf{a}_t$ transforms as $\rho_a(g)\mathbf{a}^k = (1-k)\rho_a(g)\boldsymbol{\epsilon} + k\rho_a(g)\mathbf{a}_t$ by linearity. Since $\rho_a(g)$ is orthogonal and $\boldsymbol{\epsilon} \sim \mathcal{N}(\mathbf{0}, \mathbf{I})$, we have $\rho_a(g)\boldsymbol{\epsilon} \sim \mathcal{N}(\mathbf{0}, \mathbf{I})$ (rotational invariance of the isotropic Gaussian).
The target velocity transforms as $\rho_a(g)(\mathbf{a}_t - \boldsymbol{\epsilon})$, and by~\eqref{eq:velocity_equivariance} the predicted velocity transforms identically, so the squared loss $\norm{\mathbf{v}_\theta - (\mathbf{a}_t - \boldsymbol{\epsilon})}^2$ is unchanged under the simultaneous action of $g$ (using $\norm{\rho_a(g)\mathbf{v}} = \norm{\mathbf{v}}$).

(ii)~At each ODE step $\mathbf{a}^{k+\Delta k} = \mathbf{a}^k + \Delta k \cdot \mathbf{v}_\theta(\mathbf{a}^k, k, \cdots)$, applying $\rho_a(g)$ to both sides and using~\eqref{eq:velocity_equivariance} shows $\rho_a(g)\mathbf{a}^{k+\Delta k} = \rho_a(g)\mathbf{a}^k + \Delta k \cdot \mathbf{v}_\theta(\rho_a(g)\mathbf{a}^k, k, \cdots)$.
By induction over integration steps, equivariance propagates from the initial noise to the final prediction~\citep{tie2025etseed}.
\end{proof}

\subsection{End-to-End Pipeline Bound}
\label{app:e2e_bound}

We compose the approximate equivariance of \equiperceptor{} with the exact equivariance of \equiactor{} to bound the full pipeline error.

\begin{assumption}[Lipschitz Continuity of \equiactor{}]
\label{ass:lipschitz}
\equiactor{} $\mathcal{A}$ is $L$-Lipschitz in its visual context input: for any two context representations $\mathbf{z}_1, \mathbf{z}_2$ with shared proprioceptive state and noise,
$\norm{\mathcal{A}(\mathbf{z}_1) - \mathcal{A}(\mathbf{z}_2)} \leq L \cdot \norm{\mathbf{z}_1 - \mathbf{z}_2}$.
This is a standard regularity assumption for neural networks with bounded weights; for transformer architectures, $L$ scales with the product of attention and MLP layer norms, and can be controlled via spectral normalization or weight decay~\citep{weiler2019general}.
\end{assumption}

\begin{corollary}[End-to-End Equivariance]
\label{cor:e2e}
Under~\Cref{ass:lipschitz}, the full \equivla{} policy $\pi(\mathbf{o}) = \mathcal{A}\bigl(\mathcal{P}(\mathbf{o})\bigr)$ satisfies:
\begin{equation}
    \norm{\pi(g \cdot \mathbf{o}) - \rho_a(g) \cdot \pi(\mathbf{o})}
    \;\leq\;
    L \cdot \Delta \cdot B(\mathbf{o}),
    \label{eq:e2e_bound}
\end{equation}
where $\rho_a$ denotes the $\SO(2)$ action on the action space, $\Delta$ is the representation defect of $\tilde{\tau}$, $B(\mathbf{o}) = \max_{h \in G} \norm{f_\theta(h \cdot \mathbf{x})}$ is the maximum ViT feature norm (with $\mathbf{x}$ the image component of~$\mathbf{o}$), and $L$ is the Lipschitz constant of \equiactor{}.
\end{corollary}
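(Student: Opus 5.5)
The plan is to compose \Cref{thm:approx_equivariance} with the exact equivariance of \equiactor{} (\Cref{prop:flow_equivariance}) through the Lipschitz bound of \Cref{ass:lipschitz}, using a triangle inequality around an intermediate ``ideal'' term. Write $\mathcal{P}(\mathbf{o})$ for the context produced by \equiperceptor{}, and fix the proprioceptive state and the noise. The noise is rotated together with the observation, as in \Cref{prop:flow_equivariance}(ii), so that $\pi$ is a deterministic function on which $g$ acts jointly. The intermediate quantity will be $\mathcal{A}\bigl(\tilde{A}(g)\,\mathcal{P}(\mathbf{o})\bigr)$, i.e.\ the action head evaluated on the exactly transformed context.

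\textbf{Step 1 (exact equivariance of the head).} By \Cref{prop:flow_equivariance}(ii), the equivariant velocity field \eqref{eq:velocity_equivariance} integrates to a map satisfying $\mathcal{A}\bigl(\tilde{A}(g)\mathbf{z}\bigr) = \rho_a(g)\,\mathcal{A}(\mathbf{z})$, with state and noise transformed accordingly. Hence $\mathcal{A}\bigl(\tilde{A}(g)\mathcal{P}(\mathbf{o})\bigr) = \rho_a(g)\,\pi(\mathbf{o})$, and therefore
\[
\norm{\pi(g\cdot\mathbf{o}) - \rho_a(g)\pi(\mathbf{o})} = \norm{\mathcal{A}\bigl(\mathcal{P}(g\cdot\mathbf{o})\bigr) - \mathcal{A}\bigl(\tilde{A}(g)\mathcal{P}(\mathbf{o})\bigr)} \le L\,\norm{\mathcal{P}(g\cdot\mathbf{o}) - \tilde{A}(g)\mathcal{P}(\mathbf{o})}.
\]
The inequality is \Cref{ass:lipschitz}.

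\textbf{Step 2 (perceptor residual).} It remains to show $\norm{\mathcal{P}(g\cdot\mathbf{o}) - \tilde{A}(g)\mathcal{P}(\mathbf{o})} \le \Delta\, B(\mathbf{o})$. The invariant stream $\mathbf{z}^{\mathrm{inv}}$ is exactly invariant. Since the VLM only sees invariant tokens, together with the wrist image and language, which are held fixed under $g$, the context tokens $\mathbf{z}^{\mathrm{ctx}}$ are unchanged. The equivariant stream differs from its ideal transform by exactly $\boldsymbol{\varepsilon}(g,\mathbf{x})$ of \Cref{thm:approx_equivariance}, with $\norm{\boldsymbol{\varepsilon}} \le \Delta B(\mathbf{x})$.

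\textbf{Step 3 (propagating through the adapter).} This step must carry the residual through $\psi^{\mathrm{eq}}$ and the adapter without enlarging it. The cleanest route is to absorb these equivariant maps into $\mathcal{A}$: redefine the head as adapter, projector and \equiactor{}, and let \Cref{ass:lipschitz} refer to this composite. Then $\mathcal{P}$ is just the FA token map, and Step 2 applies verbatim. The exactness of the absorbed layers is supplied by \Cref{prop:adapter_equivariance}, together with equivariance of $\psi^{\mathrm{eq}}$.

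I expect Step 3 to be the main obstacle. The adapter gate depends on $\bar{\mathbf{z}}^{\mathrm{eq}}$, so the composite is not obviously Lipschitz with the same $L$, and its input only approximately transforms. I would handle this by stating that $L$ is the Lipschitz constant of the full post-FA map. In the exact case $\Delta=0$, \Cref{prop:exact_equivariance} gives zero residual, and the bound yields exact equivariance, consistent with \Cref{prop:exact_cases}(a).
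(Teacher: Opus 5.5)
Your proposal is correct and is essentially the paper's proof. The paper also writes $\mathcal{P}(g\cdot\mathbf{o}) = \tilde{A}(g)\,\mathcal{P}(\mathbf{o}) + \boldsymbol{\varepsilon}$ with $\norm{\boldsymbol{\varepsilon}} \le \Delta B$ from \Cref{thm:approx_equivariance}, invokes \Cref{prop:flow_equivariance} and \Cref{prop:adapter_equivariance} to identify $\mathcal{A}\bigl(\tilde{A}(g)\mathcal{P}(\mathbf{o})\bigr)$ with $\rho_a(g)\,\pi(\mathbf{o})$, and bounds the remainder by $L\norm{\boldsymbol{\varepsilon}}$; the conventions you make explicit (noise rotated jointly with the input, wrist image and language held fixed, and $L$ taken as the Lipschitz constant of everything downstream of the FA token map) are exactly what the paper's short argument relies on without stating.
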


\begin{proof}
Write $\mathcal{P}(g \cdot \mathbf{o}) = \tilde{A}(g) \cdot \mathcal{P}(\mathbf{o}) + \boldsymbol{\varepsilon}$
with $\norm{\boldsymbol{\varepsilon}} \leq \Delta \cdot B$ by \Cref{thm:approx_equivariance}.
Since \equiactor{} is exactly equivariant~(\Cref{prop:flow_equivariance}) and the Equivariant Adapter preserves equivariance~(\Cref{prop:adapter_equivariance}):
\begin{align}
    \pi(g \cdot \mathbf{o})
    &= \mathcal{A}\bigl(\tilde{A}(g) \cdot \mathcal{P}(\mathbf{o}) + \boldsymbol{\varepsilon}\bigr) \\
    &= \underbrace{\mathcal{A}\bigl(\tilde{A}(g) \cdot \mathcal{P}(\mathbf{o})\bigr)}_{= \;\rho_a(g) \cdot \pi(\mathbf{o})}
    + \bigl[\mathcal{A}\bigl(\tilde{A}(g) \cdot \mathcal{P}(\mathbf{o}) + \boldsymbol{\varepsilon}\bigr) - \mathcal{A}\bigl(\tilde{A}(g) \cdot \mathcal{P}(\mathbf{o})\bigr)\bigr].
\end{align}
By the Lipschitz property, the bracketed remainder has norm $\leq L\norm{\boldsymbol{\varepsilon}} \leq L \cdot \Delta \cdot B$.
\end{proof}

\begin{remark}[Structure, Practical Implications, and Relation to Prior Work]
\label{rmk:unified}
The bound~\eqref{eq:e2e_bound} cleanly separates three independent factors controlling equivariance quality:
\begin{enumerate}[leftmargin=*,itemsep=1pt]
    \item $\Delta$: the spatial homomorphism defect, a pure geometric quantity determined by $G$ and the patch grid.
    For $C_4$ on a square grid, $\Delta = 0$ (exact).
    For $C_8$, only the $45^\circ$-offset coset contributes, and $\Delta$ can be computed in closed form for any given grid~(\Cref{tab:defect_values}).
    The per-element bound (left inequality in~\eqref{eq:approx_bound}) is significantly tighter when many $(g, h)$ pairs have zero defect, as the coset structure guarantees for the $C_4$ subgroup.
    \item $B$: the ViT feature magnitude, which is bounded for normalized inputs and architectures with LayerNorm.
    \item $L$: the Lipschitz constant of \equiactor{}, controlled by weight norms and architecture depth.
    For transformers with spectral normalization, $L$ scales polynomially rather than exponentially in depth.
\end{enumerate}
\equiactor{} alone (with invariant VLM context, i.e., without \equiperceptor{}) achieves exact equivariance by construction via steerable layers, since no approximate spatial reassignment is involved.
The full \equivla{} pipeline incurs approximation error from \equiperceptor{}'s nearest-neighbor reassignment, but benefits from richer, spatially equivariant visual features.
We validate empirically in~\Cref{sec:ablations} that the measured equivariance error is small and consistent with the bound structure.

\emph{Relation to prior work.}
The representation defect framework extends the exact equivariance guarantees of Frame Averaging~\citep{puny2021frame} to the approximate regime arising from spatial discretization, a setting not addressed in the original FA theory.
\citet{wang2025practical} apply FA to globally pooled ResNet features without spatial structure, making the exact FA guarantee trivially applicable; our token-level extension requires the approximate analysis because spatial permutations on finite grids break the homomorphism property.
In the broader landscape of approximate equivariance, \citet{park2024approximate} bound Q-function errors for approximately equivariant MDPs; our bound is analogous but operates at the representation level rather than the value-function level.
\end{remark}

\begin{corollary}[Sample Complexity Benefit of Equivariance]
\label{cor:sample_complexity}
Let $\mathcal{F}$ be a function class for the policy, and $\mathcal{F}_G = \set{f \in \mathcal{F} : f(g \cdot \mathbf{o}) = \rho_a(g) f(\mathbf{o})\;\forall g \in G}$ its $G$-equivariant subclass.
Then the empirical Rademacher complexity satisfies~\citep{elesedy2021provably}:
\begin{equation}
    \hat{\mathcal{R}}_n(\mathcal{F}_G) \;\leq\; \frac{1}{|G|}\,\hat{\mathcal{R}}_n(\mathcal{F}).
    \label{eq:rademacher_bound}
\end{equation}
By standard generalization bounds, this implies the $G$-equivariant class requires a factor of $|G|$ fewer samples to achieve the same generalization error, yielding up to an $8\times$ reduction for $C_8$.
For the approximately equivariant \equivla{} pipeline, the effective reduction is modulated by the equivariance residual: if $\norm{\boldsymbol{\varepsilon}} \leq \epsilon_0$, the generalization gap is bounded by $\frac{1}{|G|}\hat{\mathcal{R}}_n(\mathcal{F}) + \mathcal{O}(\epsilon_0)$, interpolating between the full equivariant benefit ($\epsilon_0 = 0$) and the unconstrained baseline ($\epsilon_0 \to \infty$).
\end{corollary}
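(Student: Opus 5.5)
The statement to prove is \Cref{cor:sample_complexity}. The plan is to transfer the Rademacher-complexity result of \citet{elesedy2021provably} to this setting and then treat the approximate case as a perturbation. The approach is operator-theoretic, built on the symmetrization map
\[
Q f(\mathbf{o}) = \frac{1}{|G|}\sum_{g\in G}\rho_a(g)^{-1} f(g\cdot\mathbf{o}).
\]
This is exactly the Frame Averaging operator already used for $\mathbf{z}^{\mathrm{eq}}$ in \Cref{prop:exact_equivariance}. By the same change of variables $h'=hg$ used there, $Qf$ is $G$-equivariant for every $f$, and $Qf=f$ whenever $f\in\mathcal{F}_G$.

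\textbf{Step 1 (exact equivariant class).} I will assume, as a hypothesis I must add explicitly, that $\mathcal{F}$ is closed under $Q$. I will also assume that the data distribution is $G$-invariant, so that the sample $\{\mathbf{o}_i\}$ may be replaced by its orbit-augmented version. Write $\hat{\mathcal{R}}_n(\mathcal{F}_G)=\mathbb{E}_\sigma\sup_{f\in\mathcal{F}_G}\frac1n\sum_i\langle\sigma_i,f(\mathbf{o}_i)\rangle$ and substitute $f=Qf$. Since $\rho_a(g)$ is orthogonal, the supremum becomes
\[
\frac{1}{n|G|}\sum_{i,g}\langle\rho_a(g)\sigma_i,\,f(g\cdot\mathbf{o}_i)\rangle,
\]
which is a Rademacher-type average over the $n|G|$ augmented points. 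Following \citet{elesedy2021provably}, I then decompose $\mathcal{F}$ orthogonally in $L^2$ into its equivariant part and its anti-symmetric part. This yields an inequality between $\hat{\mathcal{R}}_n(\mathcal{F}_G)$ and $\hat{\mathcal{R}}_n(\mathcal{F})$, and the claimed factor $1/|G|$ should come from the variance reduction of averaging over $|G|$ orbit copies. I expect this to be the main obstacle. In the generic case, the Elesedy--Zaidi argument gives a strict gain that scales with the dimension of the anti-symmetric subspace, not a clean factor $1/|G|$. A factor $1/|G|$ is naturally obtained only for the effective sample size: an equivariant $f$ is determined by its values on a fundamental domain, so the $n|G|$ augmented points act like $|G|$ times more data, which gives the rate $\mathcal{R}_{n|G|}$. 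My first attempt will therefore prove $\hat{\mathcal{R}}_n(\mathcal{F}_G)\le \hat{\mathcal{R}}_{n|G|}(\mathcal{F})$. I will then use the $1/\sqrt{m}$ scaling of Rademacher complexity for the class in question to convert this into the stated bound, being careful about whether the stated $1/|G|$ needs to be weakened to $1/\sqrt{|G|}$ or held under a linear-in-$1/n$ assumption.

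\textbf{Step 2 (sample count).} The statement that the equivariant class needs a factor of $|G|$ fewer samples then follows immediately by plugging the bound from Step 1 into the standard bound
\[
\text{gap}\le 2\hat{\mathcal{R}}_n+O\!\left(\sqrt{\log(1/\delta)/n}\right).
\]

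\textbf{Step 3 (approximate pipeline).} Let $\pi$ be the \equivla{} policy. \Cref{cor:e2e} gives $\|\pi-Q\pi\|_\infty\le \epsilon_0$, where $\epsilon_0=L\Delta B$, by averaging the residual in \eqref{eq:e2e_bound} over $g$. I will take the approximately equivariant class $\mathcal{F}_G^{\epsilon_0}=\{f:\|f-Qf\|_\infty\le\epsilon_0\}$. Each member splits as $f=Qf+(f-Qf)$, with $Qf\in\mathcal{F}_G$ and a residual of norm at most $\epsilon_0$. Rademacher complexity is subadditive, and a class bounded by $\epsilon_0$ contributes at most $\epsilon_0$ (up to the Lipschitz constant of the loss). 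This gives $\hat{\mathcal{R}}_n(\mathcal{F}_G^{\epsilon_0})\le\hat{\mathcal{R}}_n(\mathcal{F}_G)+O(\epsilon_0)$. Combined with Step 1, this is the stated interpolation: the case $\epsilon_0=0$ recovers the exact bound, and as $\epsilon_0\to\infty$ the class is no longer constrained and one reverts to the trivial bound $\hat{\mathcal{R}}_n(\mathcal{F})$.
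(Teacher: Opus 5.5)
Your Step~1 cannot be completed, and the obstacle you flagged is not technical. The inequality $\hat{\mathcal R}_n(\mathcal F_G)\le\frac{1}{|G|}\hat{\mathcal R}_n(\mathcal F)$ is false at the stated generality, so no argument can deliver it without further hypotheses. The paper itself gives nothing beyond the citation.

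\emph{Counterexamples.} If $\mathcal F$ consists only of equivariant maps (e.g.\ $\mathcal F=Q\mathcal H$), then it is closed under $Q$ and $\mathcal F_G=\mathcal F$. The inequality then forces $\hat{\mathcal R}_n(\mathcal F)=0$ for $|G|>1$, which fails as soon as two members of $\mathcal F$ differ on the sample. This is not a degenerate choice. Let $\mathcal F$ be all maps into a Euclidean ball, which the orthogonal $\rho_a$ preserves, and let the $\mathbf o_i$ lie in distinct orbits on which $G$ acts freely. Then an equivariant $f$ can take arbitrary values $f(\mathbf o_i)$, extended by $f(g\cdot\mathbf o_i)=\rho_a(g)f(\mathbf o_i)$, so $\hat{\mathcal R}_n(\mathcal F_G)=\hat{\mathcal R}_n(\mathcal F)>0$. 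Both examples hold sample by sample, so assuming an invariant data distribution does not help.

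\emph{What the cited work proves.} A genuine gain has to be measured by what the constraint removes from $\mathcal F$, and that is what \citet{elesedy2021provably} do. In their setting (invariant inputs, equivariant regression target, squared loss), replacing $f$ by $Qf$ lowers the risk by exactly $\norm{f-Qf}^2_{L^2(\mu)}$. For linear least squares, the expected gain is governed by the dimension of the anti-symmetric subspace. No universal $1/|G|$ factor appears. Your worry about the exponent is also justified: under $n^{-1/2}$ scaling, ``$|G|$ times fewer samples'' corresponds to a factor $1/\sqrt{|G|}$, whereas a factor $1/|G|$ would mean $|G|^2$ fewer samples.

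Your fallback $\hat{\mathcal R}_n(\mathcal F_G)\le\hat{\mathcal R}_{n|G|}(\mathcal F)$ also fails, for a concrete reason. The rewriting $\frac1n\sum_i\langle\sigma_i,f(\mathbf o_i)\rangle=\frac{1}{n|G|}\sum_{i,g}\langle\rho_a(g)\sigma_i,f(g\cdot\mathbf o_i)\rangle$ is an exact identity with no gain. All $|G|$ copies of $\mathbf o_i$ carry the same sign vector, deterministically rotated. So this is not a Rademacher average over $n|G|$ independent signs, and no variance reduction occurs.

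For example, take scalar outputs, trivial $\rho_a$, $\mathcal F$ the class of all $[-1,1]$-valued invariant functions, and sample points in distinct free orbits. The left side equals $1$. The right side, on the augmented sample, equals $\mathbb E\bigl|\frac{1}{|G|}\sum_{g}\sigma_g\bigr|\le 1/\sqrt{|G|}$.

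A correct quantitative version needs hypotheses that make the constraint bite. One option is a free action plus a covering-number comparison between $\mathcal F$ on the full domain and $\mathcal F_G$ on a fundamental domain; another is a specific model class. Even then, the natural outcome is at best $|G|$ times the effective sample size, i.e.\ a $\sqrt{|G|}$-type improvement.

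Step~3 is fine as a conditional argument: closure under $Q$, subadditivity, and an $O(\epsilon_0)$ contribution from the sup-norm-bounded residual all work. Two fixes are needed. Define the approximate class inside $\mathcal F$, and keep the trivial bound explicitly. What you get is $\hat{\mathcal R}_n\bigl(\mathcal F\cap\mathcal F_G^{\epsilon_0}\bigr)\le\min\bigl\{\hat{\mathcal R}_n(\mathcal F),\,\hat{\mathcal R}_n(\mathcal F_G)+O(\epsilon_0)\bigr\}$. The second term alone diverges as $\epsilon_0\to\infty$ rather than reverting to $\hat{\mathcal R}_n(\mathcal F)$. It also inherits no $1/|G|$ factor, since Step~1 supplies none.
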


\subsection{Empirical Equivariance Error}
\label{app:empirical_equivariance}
We complement the theoretical bounds of \Cref{thm:approx_equivariance} 
with empirical measurements of the equivariance error $\epsilon_{\mathrm{eq}}$ (defined in~\Cref{eq:epsilon-eq}).
We evaluated over $M{=}500$ randomly sampled observations from the LIBERO validation set across all $g \in C_8$. To construct the rotated observation
$g \cdot \mathbf{o}$, we apply rotation $g$ to the top-down camera
image and end-effector pose, while keeping the wrist camera image
fixed, as the wrist view rotates with the end-effector and is thus
invariant to base-frame scene rotation.

\begin{table}[h]
  \centering
  \caption{%
    Empirical equivariance error $\epsilon_{\mathrm{eq}}$
    (mean $\pm$ std) over $M{=}500$ observations and all
    $g \in C_8$, measured on the LIBERO validation set.
  }
  \label{tab:equivariance-error}
  \vspace{4pt}
  \small
  \setlength{\tabcolsep}{5pt}
  \renewcommand{\arraystretch}{1.1}
  \begin{tabular}{lc}
    \toprule
    Method & $\epsilon_{\mathrm{eq}}$ (mean $\pm$ std) $\downarrow$ \\
    \midrule
    GR00T N1.5~\citep{bjorck2025gr00t}  & $7.754 \pm 3.572$ \\
    GR00T N1.5 + \equiactor{}                & $0.837 \pm 0.738$ \\
    \rowcolor{blue!8} \textbf{\equivla{}} (ours)             & $\mathbf{0.284 \pm 0.134}$ \\
    \bottomrule
  \end{tabular}
\end{table}

GR00T~N1.5 exhibits large equivariance error ($7.754 \pm 3.572$), consistent
with the absence of any equivariance constraint across vision, state, and
action components. GR00T~N1.5 + \equiactor{} reduces the error by $9.3\times$ to
$0.837 \pm 0.738$: the equivariant action head correctly transforms actions
under $C_8$ rotations, and the proprioceptive state is handled equivariantly
through the steerable layers; however, the non-equivariant visual backbone
still produces rotation-inconsistent visual features, which propagate into the
action head as a residual violation. The full \equivla{} further reduces error to $\mathbf{0.284 \pm 0.134}$
(a $27.3\times$ reduction over the baseline) as \equiperceptor{}
suppresses the upstream vision equivariance violation via token-level
Frame Averaging, leaving only the small approximation error from the
frozen ViT's spatial discretization, consistent with \Cref{thm:approx_equivariance}.

\Cref{fig:polar-equivariance} reports the per-group equivariance
error $\epsilon_{\mathrm{eq}}$ at each $C_8$ rotation angle,
complementing the aggregate statistics in
\Cref{tab:equivariance-error}. The per-angle breakdown confirms
three predictions from the theoretical analysis:
\begin{enumerate}[leftmargin=*,itemsep=1pt]
    \item \textbf{GR00T N1.5} (no equivariance constraint) shows
    error growing monotonically from $0^\circ$ to $180^\circ$, consistent
    with the cosine distance of rotations on a non-equivariant
    feature space.
    \item \textbf{GR00T N1.5 + \equiactor{}} shows an oscillating
    pattern with higher error at $45^\circ$-offset angles
    ($45^\circ, 135^\circ, 225^\circ, 315^\circ$) than at $C_4$ angles
    ($90^\circ, 180^\circ, 270^\circ$), consistent with
    \Cref{prop:exact_cases}(b--c): the $C_4$ subgroup has zero
    defect, while the $45^\circ$-offset coset contributes all
    non-zero defect terms.
    \item \textbf{\equivla{}} remains near-zero and approximately
    flat across all angles, confirming that \equiperceptor{} suppresses
    the upstream vision equivariance violation uniformly across
    the group.
\end{enumerate}

\begin{figure}[!ht]
  \centering
  \scriptsize
  \begin{tikzpicture}
  \begin{polaraxis}[
    width=7.5cm, height=7.5cm,
    xtick={0,45,90,135,180,225,270,315},
    xticklabels={
      $0^\circ$,$45^\circ$,$90^\circ$,$135^\circ$,
      $180^\circ$,$225^\circ$,$270^\circ$,$315^\circ$
    },
    xticklabel style={font=\small},
    ymin=0, ymax=11,
    ytick={2.75,5.5,8.25,11},
    yticklabels={,,,},
    grid=both,
    grid style={gray!20, line width=0.4pt},
    legend style={
      at={(0.5,-0.12)},
      anchor=north,
      legend columns=3,
      font=\small,
      column sep=8pt,
    },
  ]
  \node[font=\small, gray, anchor=south west]
    at (axis cs:67.5, 2.75) {$2.8$};
  \node[font=\small, gray, anchor=south west]
    at (axis cs:67.5, 5.5)  {$5.5$};
  \node[font=\small, gray, anchor=south west]
    at (axis cs:67.5, 8.25) {$8.3$};
  \node[font=\small, gray, anchor=south west]
    at (axis cs:67.5, 11)   {$11$};
  \addplot[red!70!black, thick, fill=red!10, fill opacity=0.5,
         mark=*, mark size=1.8pt] coordinates {
      (0,0)(45,5.553)(90,9.289)(135,10.721)
      (180,10.906)(225,10.721)(270,9.289)(315,5.552)(360,0)
    };
    \addlegendentry{GR00T N1.5}
    \addplot[orange!80!black, thick, fill=orange!10, fill opacity=0.5,
             mark=*, mark size=1.8pt] coordinates {
      (0,0)(45,1.231)(90,0.627)(135,1.245)
      (180,0.557)(225,1.287)(270,0.534)(315,1.215)(360,0)
    };
    \addlegendentry{GR00T N1.5 + \equiactor{}}
    \addplot[green!60!black, thick, fill=green!10, fill opacity=0.5,
             mark=*, mark size=1.8pt] coordinates {
      (0,0)(45,0.216)(90,0.328)(135,0.380)
      (180,0.397)(225,0.362)(270,0.339)(315,0.247)(360,0)
    };
  \addlegendentry{\equivla{} (ours)}
  \end{polaraxis}
  \end{tikzpicture}
  \caption{%
    Per-group equivariance error $\epsilon_{\mathrm{eq}}$ at each
    $C_8$ rotation angle, measured over $M{=}500$ observations
    from the LIBERO validation set. GR00T~N1.5 (red) grows
    monotonically toward $180^\circ$. GR00T~N1.5 + \equiactor{} (orange)
    shows an oscillating pattern with higher error at
    $45^\circ$-offset angles, consistent with the coset structure
    in \Cref{prop:exact_cases}(b). \equivla{} (green) remains
    near-zero across all angles.
  }
  \label{fig:polar-equivariance}
\end{figure}

\section{Equivariant Adapter: Invariant Stream Update}
\label{app:adapter_inv_stream}

The invariant tokens in $\mathbf{z}^{\mathrm{ctx}}$ (language and wrist-camera streams) are updated via a second per-token gate that incorporates information from the equivariant stream.
An equivariant summary $\mathbf{s}^{\mathrm{eq}} = \mathrm{mean}_{N,G}(\mathbf{z}^{\mathrm{out}}_{\mathrm{eq}})$ is computed by pooling over both token positions and group channels, yielding an invariant vector.
For each invariant context token $\mathbf{z}^{\mathrm{ctx}}_i$, the update is:
\begin{equation}
    \mathbf{z}^{\mathrm{out}}_i
    = \boldsymbol{\beta}_i \odot \mathbf{W}_{\mathrm{tok}}(\mathbf{z}^{\mathrm{ctx}}_i)
    + (\mathbf{1} - \boldsymbol{\beta}_i)
      \odot \mathbf{W}_{\mathrm{ctx}}(\mathbf{s}^{\mathrm{eq}}),
    \label{eq:adapter_inv_out}
\end{equation}
where $\boldsymbol{\beta}_i = \sigma\bigl(\mathbf{W}_{\mathrm{gate\_inv}}[\mathbf{z}^{\mathrm{ctx}}_i;\, \mathbf{s}^{\mathrm{eq}}]\bigr)$ is an invariant gate.
Since both $\mathbf{z}^{\mathrm{ctx}}_i$ and $\mathbf{s}^{\mathrm{eq}}$ are $G$-invariant, the gate $\boldsymbol{\beta}_i$ is invariant and $\mathbf{z}^{\mathrm{out}}_i$ is invariant by construction.
The final adapter output concatenates all streams:
\begin{equation}
    \mathbf{z}^{\mathrm{out}} = \bigl[\mathbf{z}^{\mathrm{out}}_{\mathrm{eq}};\; \mathbf{z}^{\mathrm{out}}_{\mathrm{inv}};\; \mathbf{z}^{\mathrm{out}}_{\mathrm{lang}}\bigr],
    \label{eq:adapter_full_out}
\end{equation}
where $\mathbf{z}^{\mathrm{out}}_{\mathrm{eq}}$ is $G$-equivariant (\Cref{eq:adapter_out} in the main text) and the remaining tokens are $G$-invariant.

\section{Controlled Orientation-Shift Experiment}
\label{app:orientation_shift}
While \Cref{sec:ablations} validates equivariance error on the
discrete $C_8$ grid, a key open question is whether \equivla{}
generalizes to \emph{out-of-subgroup} angles, rotations that
lie strictly between $C_8$ grid points and were never explicitly
seen during training augmentation. Since \equiperceptor{} achieves
only approximate $\SO(2)$ equivariance and training augmentation
covers only 8 discrete angles, some performance degradation at
larger rotations is expected for all models. The relevant question
is whether equivariant models degrade \emph{more gracefully} than
the non-equivariant baseline, that is, whether geometric
inductive bias provides a meaningful advantage even at unseen
rotation angles. We design a controlled orientation-shift
experiment to directly test this hypothesis.

\textbf{Setup.}
We evaluate on LIBERO-Object at 11 rotation angles
$\{-25^\circ, -20^\circ, \ldots, +20^\circ, +25^\circ\}$ in
$5^\circ$ increments, reporting success rate averaged over 50
rollouts per angle across all 10 tasks. We restrict the range
to $\pm25^\circ$ for two reasons: first, LIBERO's camera is
mounted at a slight downward angle rather than directly overhead,
so scene rotation induces increasing perspective distortion at
larger angles; second, the training demonstrations exhibit limited rotational diversity, meaning large rotations push observations outside the training distribution regardless of architectural equivariance. Within $\pm25^\circ$, the evaluation isolates the effect of geometric inductive bias rather than out-of-distribution generalization more broadly.
At each angle $\theta$, we apply a planar rotation of $\theta$ 
degrees about the scene center to the object positions and 
end-effector pose in the proprioceptive state, while keeping 
all camera images unchanged. This simulates a scene in which 
objects and the robot's end-effector appear at rotated 
configurations relative to the canonical training distribution.

\begin{figure}[h]
\centering
\setlength{\tabcolsep}{2pt}
\renewcommand{\arraystretch}{0.9}
\begin{tabular}{ccccc}
    \includegraphics[width=0.18\linewidth]{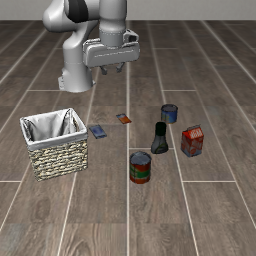} &
    \includegraphics[width=0.18\linewidth]{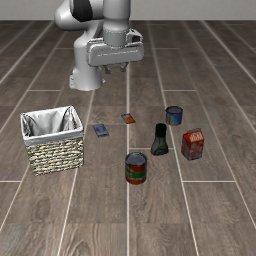} &
    \includegraphics[width=0.18\linewidth]{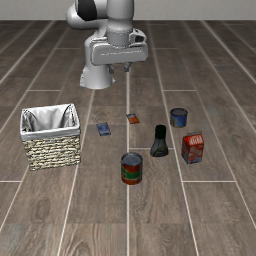} &
    \includegraphics[width=0.18\linewidth]{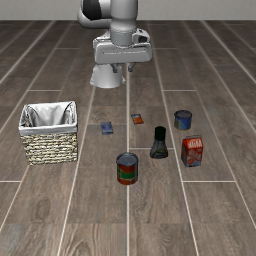} &
    \includegraphics[width=0.18\linewidth]{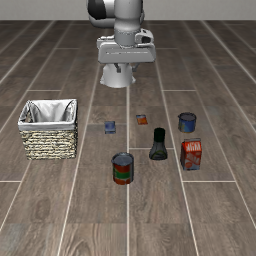} \\
    \small $-25^\circ$ & \small $-20^\circ$ & \small $-15^\circ$ & 
    \small $-10^\circ$ & \small $-5^\circ$ \\[4pt]
    \includegraphics[width=0.18\linewidth]{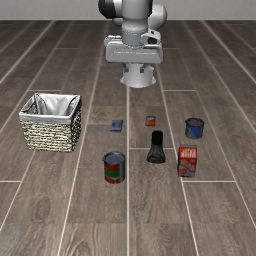} &
    \includegraphics[width=0.18\linewidth]{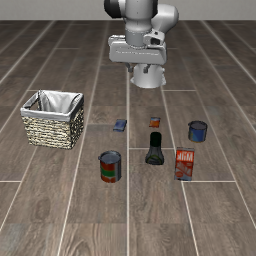} &
    \includegraphics[width=0.18\linewidth]{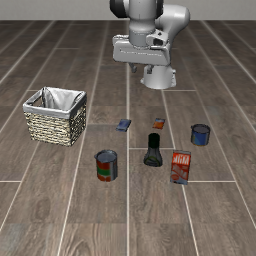} &
    \includegraphics[width=0.18\linewidth]{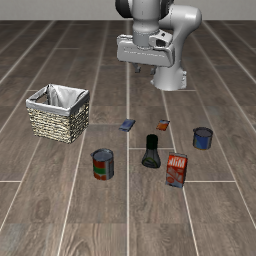} &
    \includegraphics[width=0.18\linewidth]{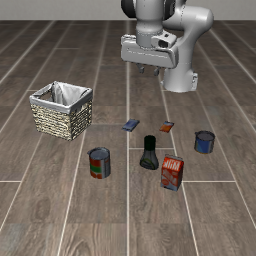} \\
    \small $+5^\circ$ & \small $+10^\circ$ & \small $+15^\circ$ & 
    \small $+20^\circ$ & \small $+25^\circ$ \\
\end{tabular}
\caption{Example LIBERO-Object scenes at each rotation angle used in the 
controlled orientation-shift experiment. Row 1: negative rotations 
($-25^\circ$ to $-5^\circ$); Row 2: positive rotations ($+5^\circ$ to 
$+25^\circ$). Object positions and end-effector pose are rotated while 
camera views remain unchanged.}
\label{fig:orientation_examples}
\end{figure}

\begin{figure}[h]
    \centering
    \includegraphics[width=0.8\linewidth]{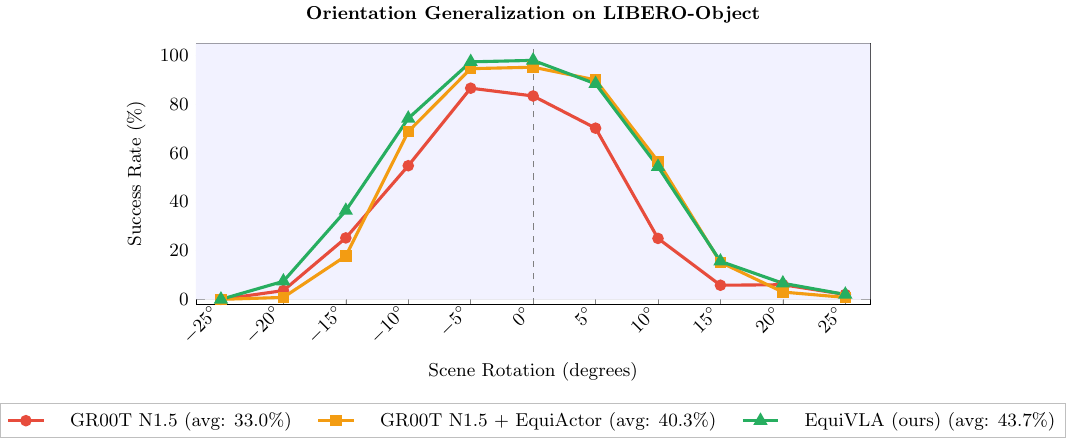}
    \caption{Success rate on LIBERO-Object under controlled 
    scene rotations from $-25^\circ$ to $+25^\circ$. 
    \equivla{} and GR00T~N1.5 + \equiactor{} maintain higher 
    and flatter performance across the rotation range compared 
    to GR00T~N1.5, which peaks near $0^\circ$ and degrades 
    rapidly at larger rotations.}
    \label{fig:orientation_shift}
\end{figure}

\textbf{Results.}
\Cref{fig:orientation_shift} reveals a clear separation between
equivariant and non-equivariant models within the moderate
rotation range. GR00T~N1.5 peaks near the canonical orientation
($-5^\circ$: $86.6\%$, $0^\circ$: $83.4\%$) and degrades
rapidly, dropping to $25.0\%$ at $+10^\circ$ and near zero
at $\pm25^\circ$, confirming the absence of rotational
generalization. Both equivariant variants maintain substantially
higher performance within $[-15^\circ, +15^\circ]$:
GR00T~N1.5 + \equiactor{} achieves $95.2\%$ at $0^\circ$ and
retains $56.6\%$ at $+10^\circ$; \equivla{} reaches $98.0\%$
at $0^\circ$ and $54.4\%$ at $+10^\circ$. Beyond $\pm15^\circ$,
all models degrade substantially as large rotations push
observations outside the training distribution, consistent
with the camera and dataset limitations described in the
setup. Averaged across all angles, \equivla{} achieves
$43.7\%$ vs.\ $40.3\%$ and $33.0\%$, confirming that
equivariant structure provides a consistent advantage within
the practical rotation range. The progressive improvement
mirrors the equivariance error reduction in
\Cref{app:empirical_equivariance}, providing direct empirical
evidence that architectural equivariance translates to
rotation generalization in practice.

\end{document}